\documentclass{article} %
\PassOptionsToPackage{table}{xcolor}
\usepackage[final]{colm2026_conference}

\usepackage[utf8]{inputenc}
\usepackage[T1]{fontenc}
\usepackage{microtype}
\usepackage{hyperref}
\usepackage{url}
\usepackage{booktabs}
\usepackage{amsfonts}
\usepackage{amsmath}
\usepackage{amssymb}
\usepackage{amsthm}
\usepackage{nicefrac}
\usepackage{xcolor}
\usepackage{graphicx}
\usepackage{multirow}
\usepackage{subcaption}
\usepackage{pifont}
\usepackage{wrapfig}
\usepackage{enumitem}
\usepackage{xspace}
\usepackage{algorithm}
\usepackage{algpseudocode}

\definecolor{darkblue}{rgb}{0, 0, 0.5}
\hypersetup{colorlinks=true, citecolor=darkblue, linkcolor=darkblue, urlcolor=darkblue}

\definecolor{lightblue}{RGB}{220,235,250}
\definecolor{oursblue}{RGB}{240,240,240}

\newtheorem{proposition}{Proposition}

\newcommand{\prism}{\textsc{Prism-$\Delta$}\xspace}
\newcommand{\prismk}{\textsc{Prism-$\Delta$}\xspace}
\newcommand{\prismkv}{\textsc{Prism-$\Delta$V}\xspace}
\newcommand{\seka}{SEKA\xspace}
\newcommand{\adaseka}{AdaSEKA\xspace}

\newcommand{\qwenA}{\textsc{Qwen3-4B}\xspace}
\newcommand{\qwenB}{\textsc{Qwen3-8B}\xspace}
\newcommand{\qwenC}{\textsc{Qwen3-14B}\xspace}
\newcommand{\gemmaA}{\textsc{Gemma3-4B}\xspace}
\newcommand{\gemmaB}{\textsc{Gemma3-12B}\xspace}

\title{\raisebox{-3pt}{\includegraphics[height=1.3em]{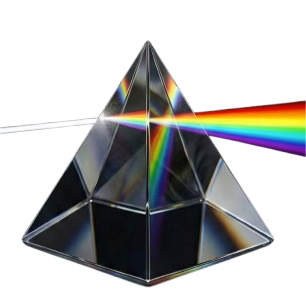}}~\textsc{Prism-$\Delta$}: Differential Subspace Steering for Prompt Highlighting in Large Language Models}

\author{%
  Yuyao Ge$^{\clubsuit}$ \quad
  Shenghua Liu$^{\clubsuit}$\thanks{Corresponding author.} \quad
  Yiwei Wang$^{\triangle}$ \quad
  Baolong Bi$^{\clubsuit}$ \\[3pt]
  \textbf{Lingrui Mei}$^{\clubsuit}$ \quad
  \textbf{Jiayu Yao}$^{\clubsuit}$ \quad
  \textbf{Jiafeng Guo}$^{\clubsuit}$ \quad
  \textbf{Xueqi Cheng}$^{\clubsuit}$ \\[6pt]
  $^{\clubsuit}$Institute of Computing Technology, Chinese Academy of Sciences \\
  $^{\triangle}$University of California, Merced \\
  \texttt{\{geyuyao24z, liushenghua\}@ict.ac.cn} \\[9pt]
  \raisebox{-2.5pt}{\includegraphics[height=1.1em]{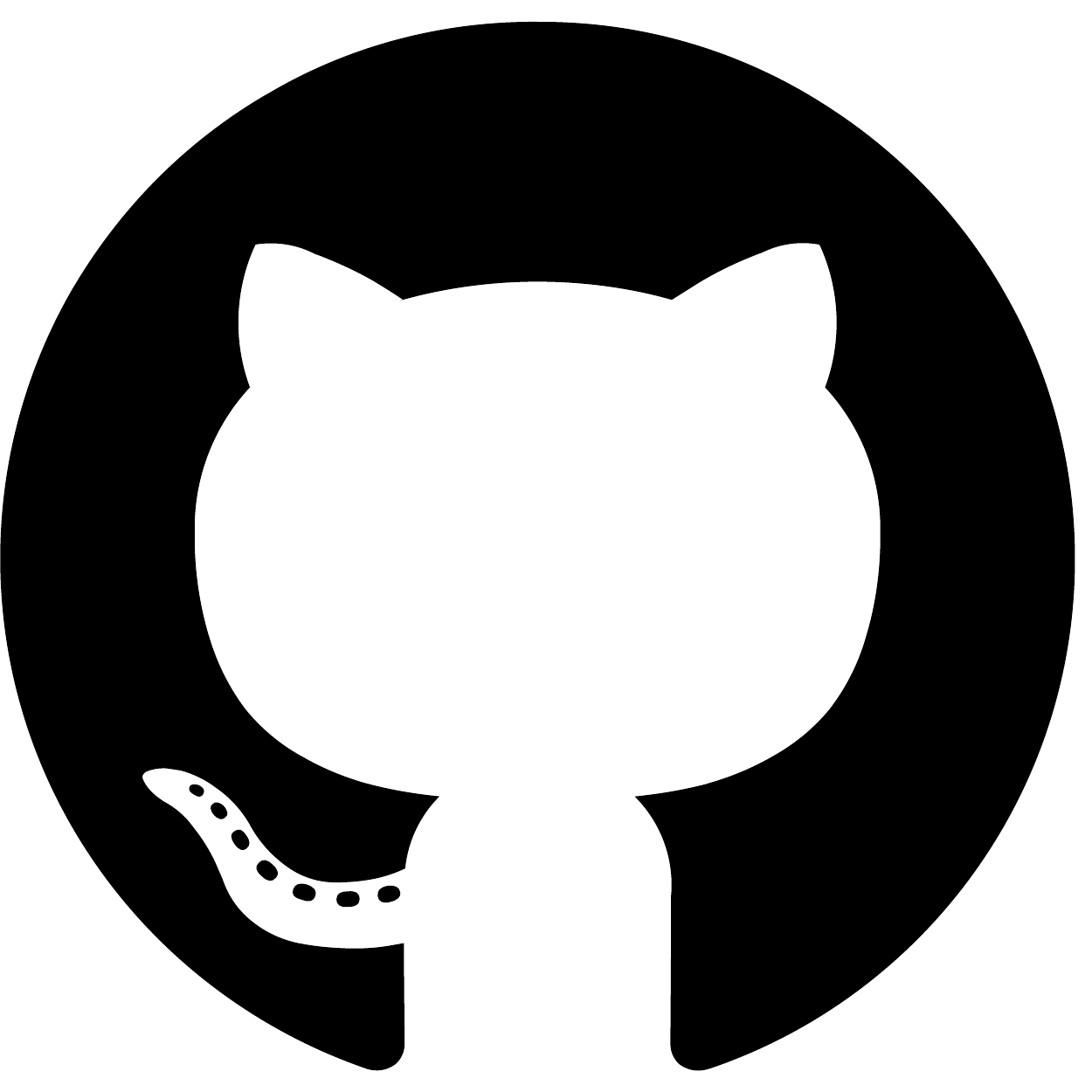}}~{\urlstyle{same}\url{https://github.com/YuyaoGe/PRISM-DELTA}}
}

\begin{document}

\maketitle
\lhead{\textsc{Prism-$\Delta$}: Differential Subspace Steering for Prompt Highlighting}

\begin{abstract}
Prompt highlighting steers a large language model to prioritize user-specified text spans during generation.
A key challenge of existing Key-editing approaches is extracting steering directions that capture the difference between relevant and irrelevant contexts, rather than shared structural patterns common to both.
We propose \prism{} (\textbf{P}rojection-based \textbf{R}elevance-\textbf{I}nformed \textbf{S}teering \textbf{M}ethod), which decomposes the difference between positive and negative cross-covariance matrices to maximize discriminative energy while eliminating shared directions. Each attention head receives a continuous softplus importance weight, letting weak-but-useful heads contribute at reduced strength.
The framework extends naturally to Value representations, capturing content-channel signal that Key-only methods leave unused.
Across four benchmarks and five models, our methods match or exceed the best existing method on 19 of 20 configurations, with relative gains up to +13.0\%, while halving the fluency cost of steering.
Our methods also scale to long-context retrieval, outperforming the best existing method by up to +5.1\% relative gain. \prism{} is compatible with FlashAttention and adds negligible memory overhead.
\end{abstract}

\section{Introduction}
\label{sec:intro}

Large language models (LLMs) frequently need to prioritize specific parts of their input. When presented with conflicting information, the model should attend to newly provided facts over its parametric memory. In long-context retrieval, the answer may reside in the middle of thirty passages, where models notoriously underperform~\citep{liu2024lost}. This problem is known as \emph{prompt highlighting}~\citep{li2026spectral}: given a prompt and a marked subset of tokens, the goal is to amplify the model's attention to those tokens so that generation becomes more accurate and faithful to user intent.

Several prompt highlighting methods have been proposed, including post-hoc attention score manipulation~\citep{pasta-zhang2024tell}, logit-level anchoring~\citep{spanchor}, and pre-attention Key vector editing via spectral decomposition~\citep{li2026spectral}. However, all of them operate solely on the \emph{routing channel}---Key representations that control where the model looks. Transformer attention output also depends on a \emph{content channel}---Value representations that determine what information is transmitted. Even when routing successfully directs the model to attend to highlighted tokens, the information those tokens convey through their Value representations remains unenhanced.

We investigate whether the Value channel carries useful signal by extracting Key and Value representations under contrastive conditions. Value shifts are comparable in magnitude to Key across all five tested models, with roughly half of all heads showing significant Value-channel signal. Key-only methods thus leave substantial signal unused.

We propose \prism{} (\textbf{P}rojection-based \textbf{R}elevance-\textbf{I}nformed \textbf{S}teering \textbf{M}ethod), which steers both channels through discriminative subspace learning and adaptive head weighting, as illustrated in Figure~\ref{fig:overview}. Our contributions are:

\begin{itemize}
    \item We introduce \emph{differential cross-covariance decomposition} as a provably optimal criterion for extracting discriminative steering directions (Proposition~\ref{prop:discriminative}): it maximizes the separation between relevant and irrelevant conditions while automatically eliminating shared structural directions that prior methods conflate with signal. Per-head softplus weighting enables continuous, rather than binary, importance assignment.
    \item We identify the \emph{content channel} (Value representations) as a fundamentally untapped source of steering signal, and introduce the first method to jointly steer both routing and content channels. Value steering reduces fluency degradation and can further improve accuracy.
\end{itemize}

\begin{figure}[t] %
    \centering
    \includegraphics[width=\textwidth]{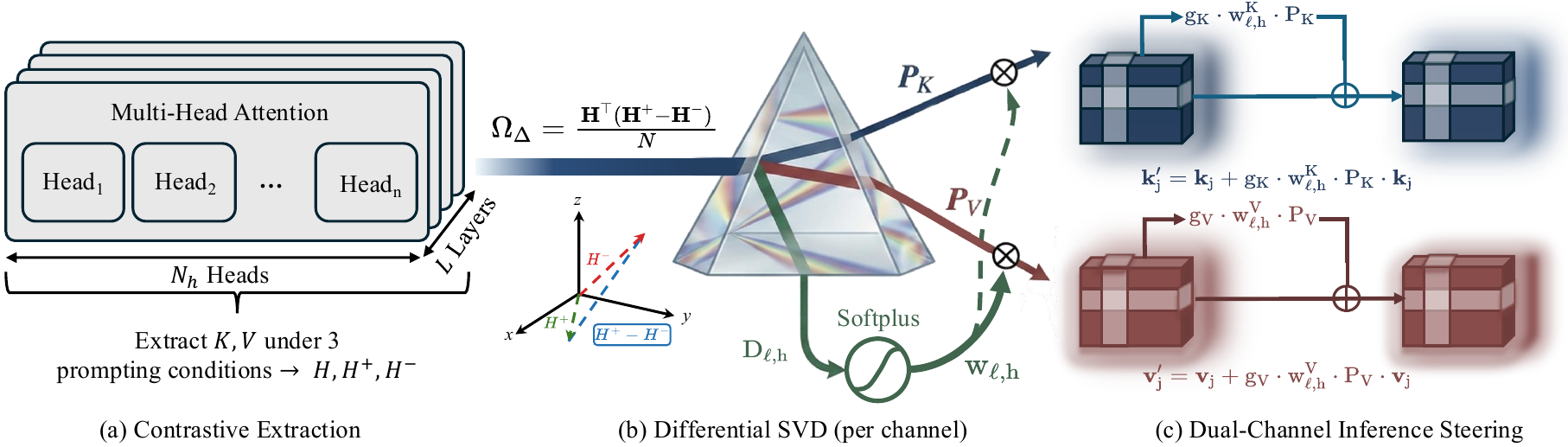}
    \caption{Overview of \prism{}. SVD decomposes $\Omega_\Delta$ into per-head projections ($P_K$, $P_V$) and importance weights ($w_{\ell,h}$), steering both Key and Value channels at inference.}
    \label{fig:overview}
\end{figure}

\section{Related work}
\label{sec:related}

\paragraph{Prompt highlighting.}
PASTA~\citep{pasta-zhang2024tell} modifies attention scores but is incompatible with FlashAttention~\citep{dao2022flashattention}, while SPA~\citep{spanchor} anchors at the logit level but requires multiple forward passes. \seka{}~\citep{li2026spectral} edits Key vectors via spectral decomposition with near-zero overhead. Prefix-Tuning~\citep{li-liang-2021-prefix} operates in Key/Value space by prepending learned soft tokens, but requires gradient-based training and modifies the context length. \citet{venkateswaran-contractor-2026-spotlight} dynamically steer attention toward user-specified prompt parts for instruction following, and \citet{zhan2026realreadingtransformeractivations} score head relevance with a vector-quantized autoencoder for head selection. These methods either operate only on the routing channel, require offline training, or lack principled discriminative subspace learning. \prism{} addresses all three gaps: it jointly steers Key and Value channels with per-head continuous softplus weighting, is gradient-free, and edits existing token representations in place. Orthogonally, retrieval-based and intent-aware context construction change which content reaches the model rather than how the model weighs it~\citep{fu2026contextnav,fu-etal-2026-videostir}; \prism{} operates downstream of any such selector, as Appendix~\ref{app:automatic_span} shows.

\paragraph{Activation editing and head specialization.}
Among activation editing approaches, SEA~\citep{qiu2024spectral} and ITI~\citep{li2023inferencetime} project activations along contrastive directions, Activation Addition~\citep{turner2023steering} and \citet{stolfo2025improving} steer via natural-language-derived or instruction-specific vectors, and Representation Engineering~\citep{zou2025representation}, \citet{subramani-etal-2022-extracting}, and CARVE~\citep{ge2025focusingcontrastiveattentionenhancing} extract steering representations at the population level---all operating on the residual stream without distinguishing Key and Value roles. In knowledge editing, ROME~\citep{meng2022locating}, AlphaEdit~\citep{fang2025alphaedit}, and \citet{hernandez2024inspecting} target factual associations via feed-forward weight updates or learned fact encodings, while attention analyses~\citep{clark-etal-2019-bert,voita-etal-2019-analyzing,NEURIPS2019_2c601ad9} and mechanistic studies on transformer circuits~\citep{elhage2021mathematical}, induction heads~\citep{olsson2022incontextlearninginductionheads}, and retrieval heads~\citep{wu2025retrieval} reveal head-level functional specialization that motivates per-head importance weighting. \prism{} uses differential SVD to extract discriminative directions and softplus to assign continuous weights, as detailed in Table~\ref{tab:method_comparison}.

\begin{table}[t]
\centering
\renewcommand{\arraystretch}{1}
\small
\resizebox{\textwidth}{!}{
\setlength{\tabcolsep}{10pt}
\begin{tabular}{lcccc}
\toprule
\textbf{Method} & \textbf{Steering target} & \textbf{Projection} & \textbf{Head selection} & \textbf{FlashAttn} \\
\midrule
Prefix-Tuning & Key + Value (prefix) & Gradient & None & \ding{51} \\
PASTA & Attn.\ matrix $\alpha$ & None & Profiling & \ding{55} \\
SPA & Logit distribution & None & None & \ding{51} \\
Spotlight & Attn.\ logits (bias) & Analytic & None & \ding{55} \\
REAL & Head activation & Codebook & Scoring top-$S$ & \ding{51} \\
\seka{} & Key & Indep.\ SVD & Hard threshold & \ding{51} \\
\adaseka{} & Key (multi-expert) & Indep.\ SVD & Hard threshold & \ding{51} \\
\midrule
\textbf{\prismk{}} & \textbf{Key} & \textbf{Diff.\ SVD} & \textbf{Softplus} & \ding{51} \\
\textbf{\prismkv{}} & \textbf{Key + Value} & \textbf{Diff.\ SVD} & \textbf{Softplus} & \ding{51} \\
\bottomrule
\end{tabular}
}
\caption{Comparison of prompt highlighting methods.}
\label{tab:method_comparison}
\end{table}

\section{Method}
\label{sec:method}

\subsection{Dual-channel view of attention}
\label{sec:dual_channel}

In standard multi-head attention, given input hidden states $\mathbf{x}_i$, each head $(\ell, h)$ computes:
\begin{equation}
\mathbf{q}_i = W_Q^{(\ell,h)} \mathbf{x}_i, \quad \mathbf{k}_j = W_K^{(\ell,h)} \mathbf{x}_j, \quad \mathbf{v}_j = W_V^{(\ell,h)} \mathbf{x}_j
\end{equation}
We drop the $(\ell,h)$ superscripts in subsequent equations for readability.
\begin{equation}
\text{output}_i = \sum_j \underbrace{\text{softmax}\!\left(\frac{\mathbf{q}_i^\top \mathbf{k}_j}{\sqrt{d}}\right)}_{\alpha_{ij}:\ \text{routing}} \cdot\ \underbrace{\mathbf{v}_j}_{\text{content}}
\label{eq:attention}
\end{equation}

\paragraph{Problem definition.} Given a prompt $\mathbf{x} = (x_1, \dots, x_T)$ and a set of highlighted tokens $\mathcal{S} \subset \{1, \dots, T\}$, the goal is to amplify the influence of tokens $j \in \mathcal{S}$ in the attention output.

\paragraph{Dual-channel decomposition.} The attention output is jointly determined by two functionally distinct channels: the \emph{routing channel} ($\mathbf{K} \to \alpha$: determines \emph{where} to look) and the \emph{content channel} ($\mathbf{V}$: determines \emph{what} information is transmitted). If we simultaneously perturb both channels for highlighted tokens:
\begin{align}
\text{output}'_i &= \sum_j (\alpha_{ij} + \Delta\alpha_{ij}) \cdot (\mathbf{v}_j + \Delta\mathbf{v}_j) \nonumber \\
&= \text{output}_i + \underbrace{\sum_j \Delta\alpha_{ij} \cdot \mathbf{v}_j}_{\text{routing gain}} + \underbrace{\sum_j \alpha_{ij} \cdot \Delta\mathbf{v}_j}_{\text{content gain}} + \underbrace{\sum_j \Delta\alpha_{ij} \cdot \Delta\mathbf{v}_j}_{\text{cross gain}}
\label{eq:decomposition}
\end{align}
Existing prompt highlighting methods capture only the routing gain; the content gain and cross gain remain unused. Figure~\ref{fig:dual_channel_evidence} confirms that Key and Value carry complementary signals, with their strength peaking at different network depths.

\begin{figure}[t] %
\centering
\begin{subfigure}[b]{0.48\linewidth}
    \includegraphics[width=\linewidth]{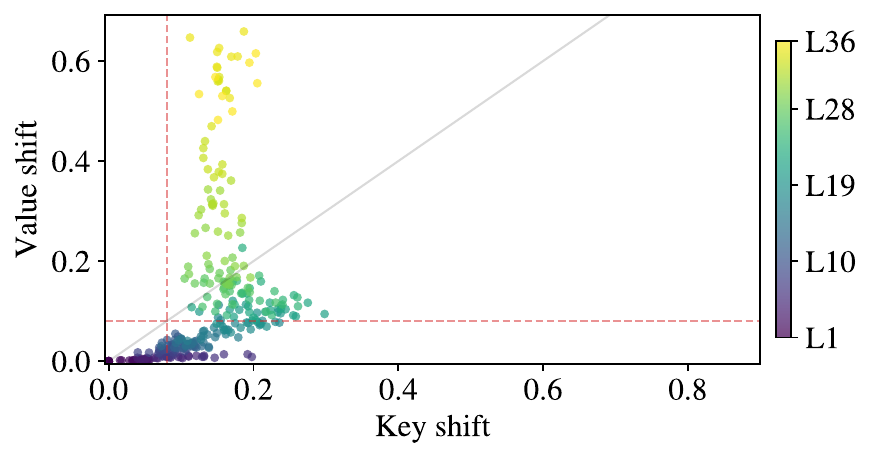}
    \caption{Per-head K vs.\ V discriminative shift.}
    \label{fig:kv_scatter}
\end{subfigure}
\hfill
\begin{subfigure}[b]{0.48\linewidth}
    \includegraphics[width=\linewidth]{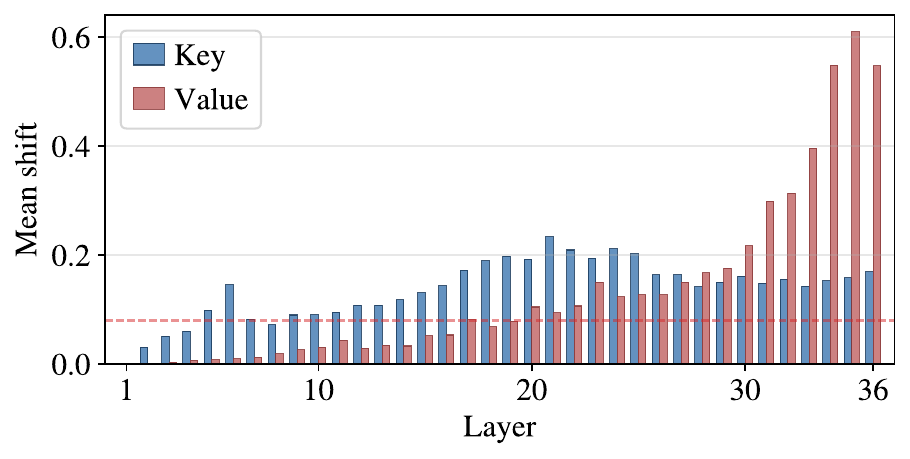}
    \caption{Layer-wise K and V signal strength.}
    \label{fig:layer_signal}
\end{subfigure}
\caption{Dual-channel discriminative signals in \qwenA{}-Base. \textbf{(a)} Each point is one attention head; Key and Value shifts are weakly correlated ($r{=}0.342$), confirming that the two channels carry complementary information. \textbf{(b)} Key signal peaks in middle layers, while Value signal peaks in late layers, suggesting functional specialization across depth.}
\label{fig:dual_channel_evidence}
\end{figure}

\subsection{Discriminative subspace learning}
\label{sec:subspace}

\paragraph{Contrastive data and representation extraction.} We construct synthetic QA triplets to identify contrastive directions. For each text context, we extract representations (Key or Value) at the answer token position under three conditions: $\mathbf{H}$ (neutral, context only), $\mathbf{H}^+$ (positive, context + relevant question), and $\mathbf{H}^-$ (negative, context + irrelevant question).

\paragraph{Differential cross-covariance.} We define the uncentered cross-covariance $\Omega^+ = \mathbf{H}^\top \mathbf{H}^+ / N$. Its top singular directions may include \emph{shared} directions that co-vary equally under both conditions. To isolate truly discriminative directions, we decompose the \emph{differential} cross-covariance:
\begin{equation}
\Omega_\Delta = \mathbf{H}^\top(\mathbf{H}^+ - \mathbf{H}^-) / N = \Omega^+ - \Omega^-
\label{eq:diff_cov}
\end{equation}

\begin{proposition}[Discriminative optimality of differential directions]
\label{prop:discriminative}
Let $\Omega_\Delta = U_\Delta \Sigma_\Delta V_\Delta^\top$ be the SVD of the differential cross-covariance. Then:
\begin{enumerate}
    \item[(a)] \textbf{Maximum discriminative energy.} The top-$k$ left singular vectors $\{u_1, \dots, u_k\}$ solve $\max_{U \in \mathbb{R}^{d \times k},\, U^\top U = I} \|U^\top \Omega_\Delta\|_F^2$, i.e., they capture the $k$-dimensional subspace that maximizes the cross-covariance difference between positive and negative conditions (by the Eckart--Young theorem).
    \item[(b)] \textbf{Automatic elimination of shared directions.} If a direction $\mathbf{u}_s$ satisfies $(\Omega^+)^\top \mathbf{u}_s = (\Omega^-)^\top \mathbf{u}_s$, then $\Omega_\Delta^\top \mathbf{u}_s = \mathbf{0}$: shared left-space directions contribute zero to the differential projection, regardless of their energy in $\Omega^+$.
\end{enumerate}
\end{proposition}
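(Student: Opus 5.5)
Part (a) reduces to a trace maximization over the Stiefel manifold, so I would treat it as a Ky Fan-type argument. For $U \in \mathbb{R}^{d\times k}$ with $U^\top U = I$, write
\[
\|U^\top \Omega_\Delta\|_F^2 = \operatorname{tr}\!\left(U^\top \Omega_\Delta \Omega_\Delta^\top U\right).
\]
Set $M = \Omega_\Delta\Omega_\Delta^\top = U_\Delta \Sigma_\Delta \Sigma_\Delta^\top U_\Delta^\top$. This matrix is symmetric positive semidefinite, its eigenvectors are the left singular vectors $u_i$, and its eigenvalues are $\sigma_i^2$ in nonincreasing order.

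Then I would invoke the Ky Fan maximum principle: for symmetric $M$, the maximum of $\operatorname{tr}(U^\top M U)$ over orthonormal $k$-frames equals $\sum_{i\le k}\lambda_i(M)$, and it is attained at the top-$k$ eigenvectors. To keep it self-contained, I would prove this directly. Let $P = UU^\top$ and write $\operatorname{tr}(U^\top M U) = \sum_i \lambda_i\, c_i$ with $c_i = \|U^\top u_i\|^2$. The weights satisfy $0\le c_i\le 1$ and $\sum_i c_i = k$. A linear function over this polytope is maximized by putting all weight on the $k$ largest $\lambda_i$. Choosing $U = [u_1,\dots,u_k]$ attains this bound, since then $c_i = 1$ for $i\le k$ and $c_i=0$ otherwise, giving the value $\sum_{i\le k}\sigma_i^2$.

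This is the Eckart--Young connection the statement cites. $U U^\top \Omega_\Delta$ is the best rank-$k$ approximation lying in the column span of $U$. Maximizing the retained energy is therefore equivalent to minimizing the residual $\|\Omega_\Delta - UU^\top\Omega_\Delta\|_F^2 = \|\Omega_\Delta\|_F^2 - \|U^\top\Omega_\Delta\|_F^2$. I would mention this equivalence as the alternative route. I would also note that the maximizing subspace is unique when $\sigma_k > \sigma_{k+1}$; otherwise "the" top-$k$ vectors means any choice within the tied eigenspace.

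Part (b) is a one-line computation by linearity of transposition:
\[
\Omega_\Delta^\top \mathbf{u}_s = (\Omega^+ - \Omega^-)^\top \mathbf{u}_s = (\Omega^+)^\top\mathbf{u}_s - (\Omega^-)^\top\mathbf{u}_s = \mathbf{0}.
\]
I would then draw the consequence that makes "contribute zero" precise. $\mathbf{u}_s$ lies in $\ker \Omega_\Delta^\top$, which is the orthogonal complement of the column space of $\Omega_\Delta$. This column space is spanned by the left singular vectors with $\sigma_i>0$, so $\mathbf{u}_s \perp u_i$ for every such $i$. Equivalently, $\mathbf{u}_s^\top M \mathbf{u}_s = 0$, so the direction adds nothing to the objective in (a) no matter how large $\|(\Omega^+)^\top \mathbf{u}_s\|$ is.

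The main obstacle is only the inequality step of the Ky Fan argument, namely justifying the constraints $c_i\le 1$ and $\sum_i c_i = k$. Both follow because $P$ is an orthogonal projection of rank $k$: $c_i = u_i^\top P u_i \le 1$, and $\sum_i c_i = \operatorname{tr}(P) = k$. Everything else is routine linear algebra.
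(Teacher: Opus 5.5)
Your proof is correct. Part (b) is the same one-line computation the paper gives; your added observation $\mathbf{u}_s^\top \Omega_\Delta\Omega_\Delta^\top \mathbf{u}_s = \|\Omega_\Delta^\top \mathbf{u}_s\|^2 = 0$ is a nice way to tie (b) back to the objective in (a).

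For part (a) your route genuinely differs from the paper's. The paper cites Eckart--Young--Mirsky, namely that the truncated SVD minimizes $\|\Omega_\Delta - B\|_F$ over rank-$k$ matrices $B$. It then asserts, without further argument, that this is equivalent to $U_\Delta[:,:k]$ maximizing $\|U^\top\Omega_\Delta\|_F^2$. You instead prove the maximization directly as a Ky Fan trace inequality. You expand $\operatorname{tr}(U^\top \Omega_\Delta\Omega_\Delta^\top U) = \sum_i \sigma_i^2 c_i$ and solve the linear program over $0\le c_i\le 1$, $\sum_i c_i = k$. The constraint $\sum_i c_i = \operatorname{tr}(UU^\top) = k$ uses that the $u_i$ form a complete orthonormal basis. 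That holds here because $\Omega_\Delta$ is $d\times d$ and you take the full SVD; it is worth stating explicitly.

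Your version is self-contained and avoids Eckart--Young entirely. It also yields the optimal value $\sum_{i\le k}\sigma_i^2$ explicitly and makes the tie case $\sigma_k=\sigma_{k+1}$ visible. The paper's version is shorter, but its ``equivalently'' step silently relies on two facts. One is the Pythagorean identity $\|\Omega_\Delta - UU^\top\Omega_\Delta\|_F^2 = \|\Omega_\Delta\|_F^2 - \|U^\top\Omega_\Delta\|_F^2$. The other is that $UU^\top\Omega_\Delta$ is itself a rank-$\le k$ competitor in the Eckart--Young problem. Your alternative-route remark spells out exactly these two facts, so it fills the gap the paper leaves.
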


Proposition~\ref{prop:discriminative} requires no distributional assumptions; it holds for any finite sample set (proof in Appendix~\ref{app:proof}). Part~(a) directly justifies the choice of differential SVD over independent SVD, while part~(b) formally explains why shared variance directions are absent from the learned projection.

For comparison, let $P^+ = U^+_{:,:k}(U^+_{:,:k})^\top$ and $P^- = U^-_{:,:k}(U^-_{:,:k})^\top$ denote independent SVD projections of $\Omega^+$ and $\Omega^-$, using the same energy threshold $\gamma$ for rank selection. Figure~\ref{fig:proj_structure} visualizes the structure of these matrices for a representative head, showing that independent projections $P^+$ and $P^-$ share overlapping column spaces (structural redundancy), while $P_\Delta$ directly targets the differential subspace.

\begin{figure}[t] %
\centering
\begin{subfigure}[b]{0.33\textwidth}
    \includegraphics[width=\linewidth]{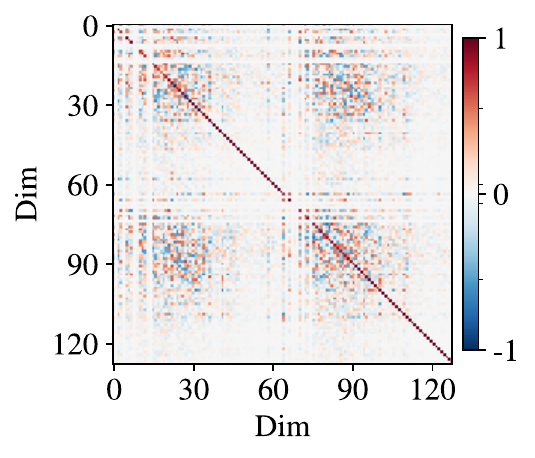}
    \caption{$P^+$ (independent, positive)}
    \label{fig:proj_pos}
\end{subfigure}%
\hspace{-2pt}%
\begin{subfigure}[b]{0.33\textwidth}
    \includegraphics[width=\linewidth]{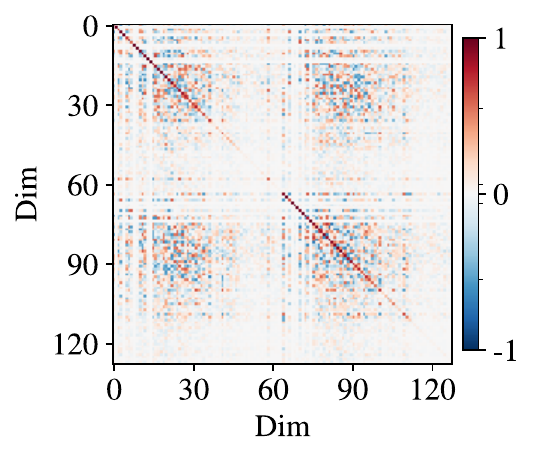}
    \caption{$P^-$ (independent, negative)}
    \label{fig:proj_neg}
\end{subfigure}%
\hspace{-2pt}%
\begin{subfigure}[b]{0.33\textwidth}
    \includegraphics[width=\linewidth]{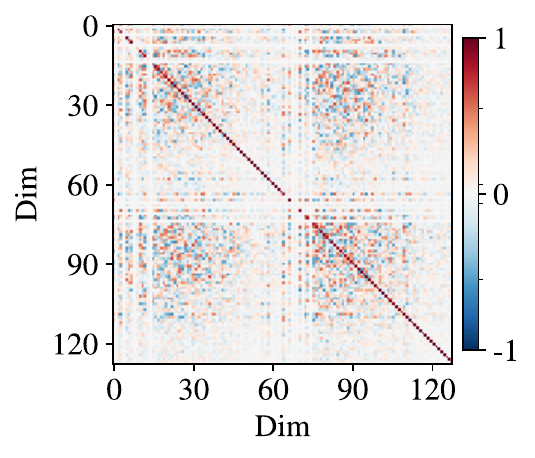}
    \caption{$P_\Delta$ (differential)}
    \label{fig:proj_delta}
\end{subfigure}
\caption{Projection matrix structure for layer 21, head 4 of \qwenA{} with $d{=}128$. Independent projections $P^+$ and $P^-$ exhibit overlapping subspaces, while the differential projection $P_\Delta$ directly targets the discriminative subspace.}
\label{fig:proj_structure}
\end{figure}

\begin{algorithm}[thb]
    \caption{\prism{}: offline projection learning and online steering}
    \label{alg:prism}
    \begin{algorithmic}[1]
    \State \textbf{Input:} Model $\mathcal{M}$ with $L$ layers and $n_h$ heads; contrastive triplets $\{(c_i,q_i^+,q_i^-)\}_{i=1}^N$; $\gamma,\delta_{\min},g_K,g_V$
    \For{each layer $\ell$ and head $h$}
        \State Extract $\mathbf{H},\mathbf{H}^+,\mathbf{H}^-\in\mathbb{R}^{N\times d}$ under neutral/positive/negative conditions
        \State $\Omega_\Delta^K\gets\mathbf{H}^{\top}(\mathbf{H}^+-\mathbf{H}^-)/N$; \quad $D_{\ell,h}^K\gets\frac{1}{N}\sum_i\|\mathbf{H}^+_{i,:}-\mathbf{H}^-_{i,:}\|_2$
        \State Compute SVD and retain top-$k$ vectors with cumulative energy $\geq\gamma$
        \State $P_K\gets U_{:,:k}U_{:,:k}^{\top}$; \quad $w_{\ell,h}^K\gets\mathrm{softplus}(D_{\ell,h}^K-\delta_{\min})$
        \State Repeat for Value representations to obtain $P_V,w_{\ell,h}^V$
    \EndFor
    \For{each highlighted token $j\in\mathcal{S}$ at each layer and head}
        \State $\mathbf{k}'_j\gets\mathbf{k}_j+g_Kw_{\ell,h}^KP_K\mathbf{k}_j$
        \State $\mathbf{v}'_j\gets\mathbf{v}_j+g_Vw_{\ell,h}^VP_V\mathbf{v}_j$
    \EndFor
    \State Proceed with $\mathrm{softmax}(QK'^\top/\sqrt{d})V'$
    \end{algorithmic}
    \end{algorithm}

The top-$k$ left singular vectors form the projection matrix:
\begin{equation}
P = U_\Delta[:,:k] \cdot U_\Delta[:,:k]^\top
\label{eq:projection}
\end{equation}
where $k$ is chosen so that the cumulative energy ratio reaches a threshold $\gamma$: $\sum_{i=1}^{k} \sigma_i^2 / \sum_{i=1}^{d} \sigma_i^2 \geq \gamma$.

\paragraph{Head importance weighting.} Different attention heads vary widely in their sensitivity to prompt highlighting. The extracted representations also yield a per-head discriminability measure---the norm difference $D_{\ell,h} = \frac{1}{N}\sum_i \|\mathbf{r}_i^+ - \mathbf{r}_i^-\|_2$ (where $\mathbf{r}$ denotes the head's Key or Value representation)---which quantifies how much the head's output shifts between positive and negative conditions. We map this to a continuous weight via the softplus function:
\begin{equation}
w_{\ell,h} = \text{softplus}(D_{\ell,h} - \delta_{\min}) = \log(1 + \exp(D_{\ell,h} - \delta_{\min}))
\label{eq:softplus}
\end{equation}
We choose softplus because it assigns larger weights to strongly discriminative heads and reduced but nonzero weights to weaker ones.

The projection $P$ and weight $w$ share the same data signal $D_{\ell,h}$: a large $D$ means the head's contrastive signal is strong, so its projection is reliable (Proposition~\ref{prop:discriminative}) and it should receive stronger steering. The two are not independent choices but consequences of a single data-driven measure.

\subsection{Dual-channel steering}
\label{sec:steering}

We apply the subspace learning framework (Section~\ref{sec:subspace}) separately to Key and Value spaces:
\begin{align}
\text{Key:}\quad &\Omega_\Delta^K = \mathbf{H}_K^\top(\mathbf{H}_K^+ - \mathbf{H}_K^-)/N \xrightarrow{\text{SVD}} P_K,\; w_K \\
\text{Value:}\quad &\Omega_\Delta^V = \mathbf{H}_V^\top(\mathbf{H}_V^+ - \mathbf{H}_V^-)/N \xrightarrow{\text{SVD}} P_V,\; w_V
\end{align}

At inference time, for each highlighted token $j \in \mathcal{S}$, both channels are simultaneously edited:
\begin{align}
\mathbf{k}_j' &= \mathbf{k}_j + g_K \cdot w_{\ell,h}^K \cdot P_K \cdot \mathbf{k}_j \label{eq:key_edit} \\
\mathbf{v}_j' &= \mathbf{v}_j + g_V \cdot w_{\ell,h}^V \cdot P_V \cdot \mathbf{v}_j \label{eq:val_edit}
\end{align}
where $g_K, g_V$ are gain scalars controlling the strength of routing and content steering, respectively.

\paragraph{Geometric interpretation.} Each transformation $(I + g \cdot w \cdot P) \cdot \mathbf{x}$ rescales the component of $\mathbf{x}$ within the learned subspace by a factor of $(1 + g \cdot w)$ while leaving the orthogonal component unchanged. Unlike uniform rescaling, each head's scaling factor is individually modulated by its discriminability $w_{\ell,h}$. This differs from adding a constant bias to the attention logit: Key scaling produces a \emph{query-dependent} boost---the attention shift varies with the semantic content of each query position---whereas logit bias applies a fixed offset regardless of query.

\paragraph{Instantiations.} The framework admits two variants:
\begin{itemize}
    \item \prismk{} ($g_V = 0$): steers the routing channel via pre-attention Key editing.
    \item \prismkv{} ($g_K > 0, g_V > 0$): steers both channels, capturing all three gain terms in Eq.~\ref{eq:decomposition}.
\end{itemize}
Algorithm~\ref{alg:prism} states the complete pipeline, which both variants share.

\section{Setup}
\label{sec:setup}

\paragraph{Benchmarks.} We evaluate on three prompt highlighting benchmarks: \textbf{BiasBios}~\citep{biasbios} (occupation prediction from highlighted biographies; metrics: Accuracy, Fluency, Consistency), \textbf{CounterFact}~\citep{meng2022locating} (knowledge conflict resolution; metrics: Efficacy, Paraphrase), and \textbf{Pronoun Change}~\citep{li2026spectral} (rewriting gendered pronouns to gender-neutral forms according to highlighted instructions; metrics: P.~Score, All-changed P.~Score). Formal metric definitions are in Appendix~\ref{app:metrics}; hyperparameters are tuned on held-out validation sets.

\paragraph{Models.} We evaluate across two architecture families and three scales: \qwenA{}/8B/14B-Base~\citep{yang2025qwen3} and \gemmaA{}/12B-PT~\citep{gemmateam2025gemma3}. Both families use Grouped Query Attention (GQA); the head index $(\ell, h)$ refers to KV heads, and \prism{} constructs one projection per KV head.

\paragraph{Baselines.} We compare against five baselines:
\textbf{Vanilla} (no steering),
\textbf{**-marked} (surrounding highlighted text with asterisks),
\textbf{PASTA}~\citep{pasta-zhang2024tell},
\textbf{SPA}~\citep{spanchor}, and
\textbf{\seka{}}~\citep{li2026spectral}.
A comparison with \adaseka{}, a higher-cost multi-expert variant, is provided in Appendix~\ref{app:adaseka}.

\paragraph{\prism{} configurations.}
\prismk{} uses Key-only steering ($g_V = 0$).
\prismkv{} steers both Key and Value ($g_V > 0$).
Projections are constructed offline from 100 synthetic contrastive QA pairs. Hyperparameters are listed in Appendix~\ref{app:hyperparams}. All experiments use greedy decoding on single NVIDIA H20 GPUs. Full setup details are in Appendix~\ref{app:exp_setup}.

\section{Results}
\label{sec:results}

\subsection{Main results}
\label{sec:main_results}

\begin{table}[t] %
\centering
\renewcommand{\arraystretch}{1}
\setlength{\aboverulesep}{2pt}
\setlength{\belowrulesep}{2pt}
\small
\resizebox{\textwidth}{!}{
\setlength{\tabcolsep}{1pt}
\begin{tabular}{ll ccccc cc}
\toprule
\multirow{2}{*}{\raisebox{-3pt}{\textbf{Model}}} & \multirow{2}{*}{\raisebox{-3pt}{\textbf{Metric}}} & \multicolumn{5}{c}{\textit{Baselines}} & \multicolumn{2}{c}{\textit{Ours}} \\
\cmidrule(lr){3-7} \cmidrule(lr){8-9}
& & \textbf{Vanilla} & \textbf{**-marked} & \textbf{PASTA} & \textbf{SPA} & \textbf{\seka{}} & \cellcolor{oursblue}\textbf{\prismk{}} & \cellcolor{oursblue}\textbf{\prismkv{}} \\
\midrule
 \multicolumn{9}{c}{\textbf{BiasBios}} \\
\midrule
\qwenA{} & \multirow{5}{*}{Accuracy} & 79.80 & 82.94$_{\scriptscriptstyle\blacktriangle 3.14}$ & 89.58$_{\scriptscriptstyle\blacktriangle 9.78}$ & 68.00$_{\scriptscriptstyle\blacktriangledown 11.80}$ & 90.92$_{\scriptscriptstyle\blacktriangle 11.12}$ & \cellcolor{oursblue}\textbf{92.38}$_{\scriptscriptstyle\blacktriangle 12.58}$ & \cellcolor{oursblue}\underline{92.36}$_{\scriptscriptstyle\blacktriangle 12.56}$ \\
\qwenB{} &  & 76.22 & 80.60$_{\scriptscriptstyle\blacktriangle 4.38}$ & 86.32$_{\scriptscriptstyle\blacktriangle 10.10}$ & 37.02$_{\scriptscriptstyle\blacktriangledown 39.20}$ & 88.74$_{\scriptscriptstyle\blacktriangle 12.52}$ & \cellcolor{oursblue}\textbf{89.62}$_{\scriptscriptstyle\blacktriangle 13.40}$ & \cellcolor{oursblue}\underline{89.12}$_{\scriptscriptstyle\blacktriangle 12.90}$ \\
\qwenC{} &  & 85.10 & 90.94$_{\scriptscriptstyle\blacktriangle 5.84}$ & 88.46$_{\scriptscriptstyle\blacktriangle 3.36}$ & 57.86$_{\scriptscriptstyle\blacktriangledown 27.24}$ & 90.28$_{\scriptscriptstyle\blacktriangle 5.18}$ & \cellcolor{oursblue}\textbf{91.68}$_{\scriptscriptstyle\blacktriangle 6.58}$ & \cellcolor{oursblue}\underline{91.20}$_{\scriptscriptstyle\blacktriangle 6.10}$ \\
\gemmaA{} &  & 89.90 & 91.00$_{\scriptscriptstyle\blacktriangle 1.10}$ & 82.58$_{\scriptscriptstyle\blacktriangledown 7.32}$ & 48.02$_{\scriptscriptstyle\blacktriangledown 41.88}$ & \underline{92.42}$_{\scriptscriptstyle\blacktriangle 2.52}$ & \cellcolor{oursblue}\textbf{92.90}$_{\scriptscriptstyle\blacktriangle 3.00}$ & \cellcolor{oursblue}91.64$_{\scriptscriptstyle\blacktriangle 1.74}$ \\
\gemmaB{} &  & 91.32 & 92.90$_{\scriptscriptstyle\blacktriangle 1.58}$ & \textbf{94.72}$_{\scriptscriptstyle\blacktriangle 3.40}$ & 46.88$_{\scriptscriptstyle\blacktriangledown 44.44}$ & \underline{93.04}$_{\scriptscriptstyle\blacktriangle 1.72}$ & \cellcolor{oursblue}92.22$_{\scriptscriptstyle\blacktriangle 0.90}$ & \cellcolor{oursblue}91.94$_{\scriptscriptstyle\blacktriangle 0.62}$ \\
\midrule
 \multicolumn{9}{c}{\textbf{CounterFact}} \\
\midrule
\multirow{2}{*}{\qwenA{}}
 & Efficacy & 45.00 & 57.70$_{\scriptscriptstyle\blacktriangle 12.70}$ & 97.16$_{\scriptscriptstyle\blacktriangle 52.16}$ & 65.24$_{\scriptscriptstyle\blacktriangle 20.24}$ & \underline{99.02}$_{\scriptscriptstyle\blacktriangle 54.02}$ & \cellcolor{oursblue}\textbf{99.14}$_{\scriptscriptstyle\blacktriangle 54.14}$ & \cellcolor{oursblue}98.08$_{\scriptscriptstyle\blacktriangle 53.08}$ \\
 & Paraphrase & 45.64 & 52.12$_{\scriptscriptstyle\blacktriangle 6.48}$ & 96.03$_{\scriptscriptstyle\blacktriangle 50.39}$ & 57.71$_{\scriptscriptstyle\blacktriangle 12.07}$ & \textbf{98.61}$_{\scriptscriptstyle\blacktriangle 52.97}$ & \cellcolor{oursblue}\underline{98.52}$_{\scriptscriptstyle\blacktriangle 52.88}$ & \cellcolor{oursblue}96.51$_{\scriptscriptstyle\blacktriangle 50.87}$ \\
\cmidrule(lr){2-2}
\multirow{2}{*}{\qwenB{}}
 & Efficacy & 39.04 & 56.24$_{\scriptscriptstyle\blacktriangle 17.20}$ & 92.70$_{\scriptscriptstyle\blacktriangle 53.66}$ & 69.26$_{\scriptscriptstyle\blacktriangle 30.22}$ & \underline{99.08}$_{\scriptscriptstyle\blacktriangle 60.04}$ & \cellcolor{oursblue}\textbf{99.24}$_{\scriptscriptstyle\blacktriangle 60.20}$ & \cellcolor{oursblue}98.46$_{\scriptscriptstyle\blacktriangle 59.42}$ \\
 & Paraphrase & 39.59 & 49.80$_{\scriptscriptstyle\blacktriangle 10.21}$ & 91.68$_{\scriptscriptstyle\blacktriangle 52.09}$ & 58.76$_{\scriptscriptstyle\blacktriangle 19.17}$ & \underline{98.96}$_{\scriptscriptstyle\blacktriangle 59.37}$ & \cellcolor{oursblue}\textbf{99.10}$_{\scriptscriptstyle\blacktriangle 59.51}$ & \cellcolor{oursblue}98.04$_{\scriptscriptstyle\blacktriangle 58.45}$ \\
\cmidrule(lr){2-2}
\multirow{2}{*}{\qwenC{}}
 & Efficacy & 37.56 & 45.52$_{\scriptscriptstyle\blacktriangle 7.96}$ & 76.84$_{\scriptscriptstyle\blacktriangle 39.28}$ & 84.22$_{\scriptscriptstyle\blacktriangle 46.66}$ & 98.92$_{\scriptscriptstyle\blacktriangle 61.36}$ & \cellcolor{oursblue}\textbf{99.00}$_{\scriptscriptstyle\blacktriangle 61.44}$ & \cellcolor{oursblue}\underline{98.98}$_{\scriptscriptstyle\blacktriangle 61.42}$ \\
 & Paraphrase & 36.12 & 40.12$_{\scriptscriptstyle\blacktriangle 4.00}$ & 66.33$_{\scriptscriptstyle\blacktriangle 30.21}$ & 76.11$_{\scriptscriptstyle\blacktriangle 39.99}$ & \textbf{99.02}$_{\scriptscriptstyle\blacktriangle 62.90}$ & \cellcolor{oursblue}\underline{98.82}$_{\scriptscriptstyle\blacktriangle 62.70}$ & \cellcolor{oursblue}98.72$_{\scriptscriptstyle\blacktriangle 62.60}$ \\
\cmidrule(lr){2-2}
\multirow{2}{*}{\gemmaA{}}
 & Efficacy & 55.04 & 57.56$_{\scriptscriptstyle\blacktriangle 2.52}$ & 78.36$_{\scriptscriptstyle\blacktriangle 23.32}$ & 93.90$_{\scriptscriptstyle\blacktriangle 38.86}$ & \underline{98.04}$_{\scriptscriptstyle\blacktriangle 43.00}$ & \cellcolor{oursblue}\textbf{98.10}$_{\scriptscriptstyle\blacktriangle 43.06}$ & \cellcolor{oursblue}97.68$_{\scriptscriptstyle\blacktriangle 42.64}$ \\
 & Paraphrase & 47.77 & 45.82$_{\scriptscriptstyle\blacktriangledown 1.95}$ & 59.53$_{\scriptscriptstyle\blacktriangle 11.76}$ & 91.92$_{\scriptscriptstyle\blacktriangle 44.15}$ & \textbf{98.83}$_{\scriptscriptstyle\blacktriangle 51.06}$ & \cellcolor{oursblue}\underline{96.24}$_{\scriptscriptstyle\blacktriangle 48.47}$ & \cellcolor{oursblue}96.01$_{\scriptscriptstyle\blacktriangle 48.24}$ \\
\cmidrule(lr){2-2}
\multirow{2}{*}{\gemmaB{}}
 & Efficacy & 45.34 & 48.72$_{\scriptscriptstyle\blacktriangle 3.38}$ & 68.30$_{\scriptscriptstyle\blacktriangle 22.96}$ & \underline{93.76}$_{\scriptscriptstyle\blacktriangle 48.42}$ & \textbf{98.86}$_{\scriptscriptstyle\blacktriangle 53.52}$ & \cellcolor{oursblue}\textbf{98.86}$_{\scriptscriptstyle\blacktriangle 53.52}$ & \cellcolor{oursblue}93.40$_{\scriptscriptstyle\blacktriangle 48.06}$ \\
 & Paraphrase & 37.21 & 36.69$_{\scriptscriptstyle\blacktriangledown 0.52}$ & 71.72$_{\scriptscriptstyle\blacktriangle 34.51}$ & 91.24$_{\scriptscriptstyle\blacktriangle 54.03}$ & \underline{99.27}$_{\scriptscriptstyle\blacktriangle 62.06}$ & \cellcolor{oursblue}\textbf{99.30}$_{\scriptscriptstyle\blacktriangle 62.09}$ & \cellcolor{oursblue}91.86$_{\scriptscriptstyle\blacktriangle 54.65}$ \\
\midrule
 \multicolumn{9}{c}{\textbf{Pronoun Change}} \\
\midrule
\multirow{2}{*}{\qwenA{}}
 & P.~Score & 93.14 & 95.76$_{\scriptscriptstyle\blacktriangle 2.62}$ & 95.82$_{\scriptscriptstyle\blacktriangle 2.68}$ & 80.27$_{\scriptscriptstyle\blacktriangledown 12.87}$ & 95.18$_{\scriptscriptstyle\blacktriangle 2.04}$ & \cellcolor{oursblue}\textbf{96.18}$_{\scriptscriptstyle\blacktriangle 3.04}$ & \cellcolor{oursblue}\underline{96.06}$_{\scriptscriptstyle\blacktriangle 2.92}$ \\
 & All-changed & 90.52 & 93.88$_{\scriptscriptstyle\blacktriangle 3.36}$ & \textbf{94.64}$_{\scriptscriptstyle\blacktriangle 4.12}$ & 78.19$_{\scriptscriptstyle\blacktriangledown 12.33}$ & 93.26$_{\scriptscriptstyle\blacktriangle 2.74}$ & \cellcolor{oursblue}\underline{94.60}$_{\scriptscriptstyle\blacktriangle 4.08}$ & \cellcolor{oursblue}94.52$_{\scriptscriptstyle\blacktriangle 4.00}$ \\
\cmidrule(lr){2-2}
\multirow{2}{*}{\qwenB{}}
 & P.~Score & 98.00 & 98.10$_{\scriptscriptstyle\blacktriangle 0.10}$ & 98.86$_{\scriptscriptstyle\blacktriangle 0.86}$ & 72.61$_{\scriptscriptstyle\blacktriangledown 25.39}$ & 98.56$_{\scriptscriptstyle\blacktriangle 0.56}$ & \cellcolor{oursblue}\underline{99.40}$_{\scriptscriptstyle\blacktriangle 1.40}$ & \cellcolor{oursblue}\textbf{99.48}$_{\scriptscriptstyle\blacktriangle 1.48}$ \\
 & All-changed & 97.84 & 97.84 & 98.72$_{\scriptscriptstyle\blacktriangle 0.88}$ & 74.95$_{\scriptscriptstyle\blacktriangledown 22.89}$ & 98.26$_{\scriptscriptstyle\blacktriangle 0.42}$ & \cellcolor{oursblue}\underline{99.30}$_{\scriptscriptstyle\blacktriangle 1.46}$ & \cellcolor{oursblue}\textbf{99.34}$_{\scriptscriptstyle\blacktriangle 1.50}$ \\
\cmidrule(lr){2-2}
\multirow{2}{*}{\qwenC{}}
 & P.~Score & 98.42 & \underline{98.86}$_{\scriptscriptstyle\blacktriangle 0.44}$ & 90.98$_{\scriptscriptstyle\blacktriangledown 7.44}$ & 91.60$_{\scriptscriptstyle\blacktriangledown 6.82}$ & 98.66$_{\scriptscriptstyle\blacktriangle 0.24}$ & \cellcolor{oursblue}\textbf{99.66}$_{\scriptscriptstyle\blacktriangle 1.24}$ & \cellcolor{oursblue}\textbf{99.66}$_{\scriptscriptstyle\blacktriangle 1.24}$ \\
 & All-changed & 98.22 & \underline{98.68}$_{\scriptscriptstyle\blacktriangle 0.46}$ & 90.94$_{\scriptscriptstyle\blacktriangledown 7.28}$ & 92.20$_{\scriptscriptstyle\blacktriangledown 6.02}$ & 98.54$_{\scriptscriptstyle\blacktriangle 0.32}$ & \cellcolor{oursblue}\textbf{99.56}$_{\scriptscriptstyle\blacktriangle 1.34}$ & \cellcolor{oursblue}\textbf{99.56}$_{\scriptscriptstyle\blacktriangle 1.34}$ \\
\cmidrule(lr){2-2}
\multirow{2}{*}{\gemmaA{}}
 & P.~Score & 41.34 & 38.86$_{\scriptscriptstyle\blacktriangledown 2.48}$ & 67.39$_{\scriptscriptstyle\blacktriangle 26.05}$ & 76.05$_{\scriptscriptstyle\blacktriangle 34.71}$ & 81.53$_{\scriptscriptstyle\blacktriangle 40.19}$ & \cellcolor{oursblue}\underline{89.08}$_{\scriptscriptstyle\blacktriangle 47.74}$ & \cellcolor{oursblue}\textbf{90.16}$_{\scriptscriptstyle\blacktriangle 48.82}$ \\
 & All-changed & 35.25 & 32.45$_{\scriptscriptstyle\blacktriangledown 2.80}$ & 66.43$_{\scriptscriptstyle\blacktriangle 31.18}$ & 74.45$_{\scriptscriptstyle\blacktriangle 39.20}$ & 81.11$_{\scriptscriptstyle\blacktriangle 45.86}$ & \cellcolor{oursblue}\underline{90.58}$_{\scriptscriptstyle\blacktriangle 55.33}$ & \cellcolor{oursblue}\textbf{91.68}$_{\scriptscriptstyle\blacktriangle 56.43}$ \\
\cmidrule(lr){2-2}
\multirow{2}{*}{\gemmaB{}}
 & P.~Score & 93.92 & 95.78$_{\scriptscriptstyle\blacktriangle 1.86}$ & 68.47$_{\scriptscriptstyle\blacktriangledown 25.45}$ & 86.41$_{\scriptscriptstyle\blacktriangledown 7.51}$ & \underline{97.70}$_{\scriptscriptstyle\blacktriangle 3.78}$ & \cellcolor{oursblue}\textbf{97.94}$_{\scriptscriptstyle\blacktriangle 4.02}$ & \cellcolor{oursblue}94.54$_{\scriptscriptstyle\blacktriangle 0.62}$ \\
 & All-changed & 94.96 & 96.42$_{\scriptscriptstyle\blacktriangle 1.46}$ & 68.01$_{\scriptscriptstyle\blacktriangledown 26.95}$ & 84.99$_{\scriptscriptstyle\blacktriangledown 9.97}$ & \underline{97.24}$_{\scriptscriptstyle\blacktriangle 2.28}$ & \cellcolor{oursblue}\textbf{97.46}$_{\scriptscriptstyle\blacktriangle 2.50}$ & \cellcolor{oursblue}94.42$_{\scriptscriptstyle\blacktriangledown 0.54}$ \\
\bottomrule
\end{tabular}
}
\caption{Main results across three benchmarks (\%). \textbf{Bold}: best; \underline{underline}: second best. Subscripts show absolute change vs.\ Vanilla.}
\label{tab:main_results}
\end{table}

\begin{table}[t] %
\centering
\renewcommand{\arraystretch}{0.9}
\small
\resizebox{\textwidth}{!}{
\setlength{\tabcolsep}{4pt}
\begin{tabular}{llllcccc}
\toprule
\multirow{2}{*}{\raisebox{-3pt}{\textbf{Configuration}}} & \multirow{2}{*}{\raisebox{-3pt}{\textbf{Projection}}} & \multirow{2}{*}{\raisebox{-3pt}{\textbf{Weighting}}} & \multirow{2}{*}{\raisebox{-3pt}{\textbf{Channels}}} & \multicolumn{2}{c}{\textbf{\qwenA{}}} & \multicolumn{2}{c}{\textbf{\qwenB{}}} \\
\cmidrule(lr){5-6} \cmidrule(lr){7-8}
& & & & \textbf{Acc} & \textbf{$\Delta$} & \textbf{Acc} & \textbf{$\Delta$} \\
\midrule
\prismkv{} & Diff.\ $\Omega_\Delta$ & Softplus & K+V & 92.36 & {+15.7} & 89.12 & {+16.9} \\
\prismk{} ($-$V) & Diff.\ $\Omega_\Delta$ & Softplus & K only & \textbf{92.38} & {\textbf{+15.8}} & \textbf{89.62} & {\textbf{+17.6}} \\
\prism{}-V ($-$K) & Diff.\ $\Omega_\Delta$ & Softplus & V only & 82.44 & {+3.3} & 79.16 & {+3.9} \\
\cmidrule{1-8}
$-$Differential & Independent & Softplus & K only & 91.52 & {+14.7} & 88.90 & {+16.6} \\
$-$Softplus & Differential & Uniform $w{=}1$ & K only & 91.42 & {+14.6} & 88.22 & {+15.7} \\
$-$Both & Independent & Uniform $w{=}1$ & K only & 91.44 & {+14.6} & 88.50 & {+16.1} \\
\midrule
\multicolumn{8}{l}{\textit{Baselines}} \\
\seka{} ($\delta{=}0.08$) & Dual indep. & Binary & K only & 89.56 & {+12.2} & 85.52 & {+12.2} \\
\seka{} (default) & Dual indep. & Binary $\delta{=}0.12$ & K only & 90.92 & {+13.9} & 88.74 & {+16.4} \\
Vanilla & --- & --- & --- & 79.80 & --- & 76.22 & --- \\
\bottomrule
\end{tabular}
}
\caption{Ablation study on BiasBios (Top-1 Accuracy \%). $\Delta$: relative improvement (\%) over Vanilla.}
\label{tab:ablation}
\end{table}

\begin{table}[t] %
\centering
\renewcommand{\arraystretch}{0.9}
\small
\resizebox{\textwidth}{!}{
\setlength{\tabcolsep}{16pt}
\begin{tabular}{lcccc}
\toprule
\textbf{Model} & \textbf{Vanilla} & \textbf{\seka{}} & \textbf{\prismk{}} & \textbf{\prismkv{}} \\
\midrule
\qwenA{} & 45.47 & 49.14$_{\scriptscriptstyle\blacktriangle 3.67}$ & \textbf{51.43}$_{\scriptscriptstyle\blacktriangle 5.96}$ & 50.57$_{\scriptscriptstyle\blacktriangle 5.10}$ \\
\qwenB{} & 46.69 & 50.29$_{\scriptscriptstyle\blacktriangle 3.60}$ & \textbf{52.71}$_{\scriptscriptstyle\blacktriangle 6.02}$ & 51.86$_{\scriptscriptstyle\blacktriangle 5.17}$ \\
\qwenC{} & 57.13 & 60.86$_{\scriptscriptstyle\blacktriangle 3.73}$ & \textbf{62.57}$_{\scriptscriptstyle\blacktriangle 5.44}$ & 62.00$_{\scriptscriptstyle\blacktriangle 4.87}$ \\
\gemmaA{} & 44.36 & 47.71$_{\scriptscriptstyle\blacktriangle 3.35}$ & 49.29$_{\scriptscriptstyle\blacktriangle 4.93}$ & \textbf{50.14}$_{\scriptscriptstyle\blacktriangle 5.78}$ \\
\gemmaB{} & 51.64 & \textbf{55.57}$_{\scriptscriptstyle\blacktriangle 3.93}$ & 54.43$_{\scriptscriptstyle\blacktriangle 2.79}$ & \textbf{55.57}$_{\scriptscriptstyle\blacktriangle 3.93}$ \\
\bottomrule
\end{tabular}
}
\caption{Lost-in-the-Middle: Average Exact Match (\%) across 7 gold positions. Steering targets the middle region (passages 4--25). Setup details in Appendix~\ref{app:litm_setup}.}
\label{tab:litm}
\end{table}

Table~\ref{tab:main_results} presents results across all three benchmarks and five models. On BiasBios, \prismk{} achieves the best accuracy on Qwen3 models with relative gains up to +1.6\%; on CounterFact, it ties \seka{} at 98.86\% on \gemmaB{} and reaches 99.24\% on \qwenB{}; on Pronoun Change, our methods outperform \seka{} on all 5 models with relative gains up to +10.6\% on P.~Score. Including the Lost-in-the-Middle results in Table~\ref{tab:litm}, our methods match or exceed the best existing method on 19 out of 20 model$\times$benchmark configurations; the exception is \gemmaB{} on BiasBios. Appendix~\ref{app:instruction_tuned} repeats the comparison on three instruction-tuned checkpoints.

\textbf{Statistical reliability.} Statistical reliability is high: across five projection subsets, standard deviation is 0.05--0.15\%, and \prismk{} matches or exceeds \seka{} on 14 out of 15 model$\times$benchmark cells ($p < 0.001$). Detailed variance analysis is in Appendix~\ref{app:seed_sensitivity} and sign test derivations in Appendix~\ref{app:statistics}.

\textbf{\textsc{Prism-$\Delta$} vs.\ \textsc{Prism-$\Delta$V}.} On Pronoun Change, \prismkv{} surpasses \prismk{} on \gemmaA{} by 1.08 percentage points, demonstrating that dual-channel steering can also improve accuracy when content rewriting is central. On \qwenA{} BiasBios the two variants are within 0.02 points. We recommend \prismk{} as the default and \prismkv{} when generation quality is prioritized; see Section~\ref{sec:dual_decomposition} for a detailed decomposition.

\subsection{Ablation study}
\label{sec:ablation}

Table~\ref{tab:ablation} uses a $2{\times}2$ factorial design. Removing softplus costs 0.96\% and removing differential projection costs 0.86\% on \qwenA{}. The interaction is strongly super-additive: from the $-$Both baseline, differential projection alone yields $-$0.02\% while softplus alone yields +0.08\%, yet their combination yields +0.94\%. At matched $\delta_{\min}{=}0.08$, \prismk{} achieves its best result while \seka{} drops by 1.36\%, because softplus smoothly down-weights noisy heads rather than giving all activated heads equal weight. \prism{}-V alone reaches 82.44\%, confirming that the Value channel carries independently useful discriminative signal. A CounterFact ablation (Appendix~\ref{app:cf_ablation}) and data quantity ablation (Appendix~\ref{app:data_ablation}) corroborate these findings.

\subsection{Lost-in-the-Middle}
\label{sec:litm}

Table~\ref{tab:litm} evaluates on the lost-in-the-middle benchmark~\citep{liu2024lost}, with detailed setup in Appendix~\ref{app:litm_setup}. Steering is applied selectively to the middle region of 30-passage contexts. Our methods match or exceed \seka{} on all five models, with relative gains up to 5.1\% (\prismkv{} on \gemmaA{}), confirming that \prism{} scales to long-context retrieval.

\begin{figure}[t] %
    \centering
    \includegraphics[width=\textwidth]{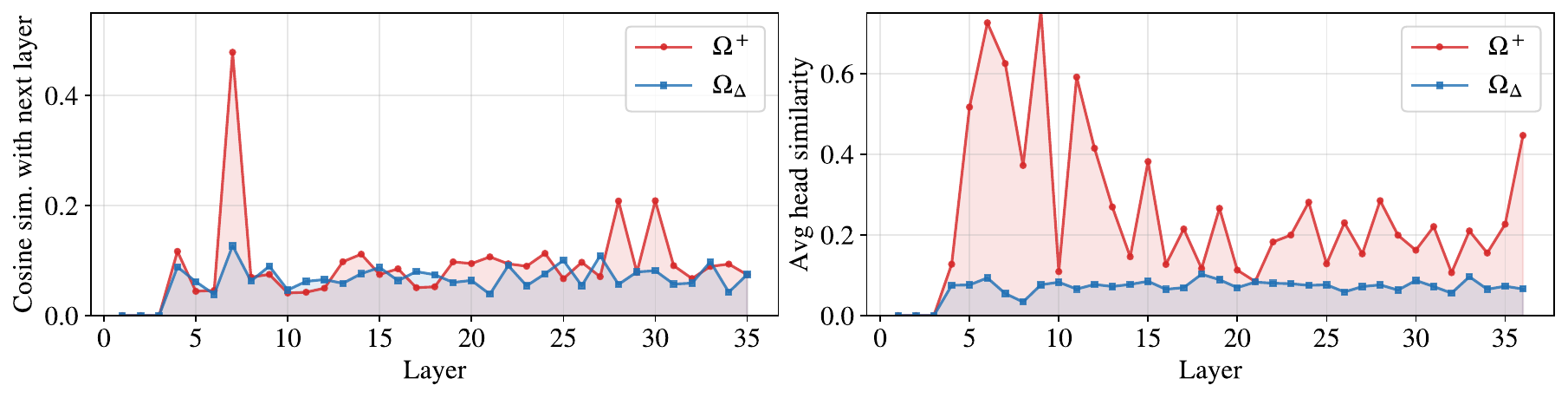}
    \caption{Direction consistency analysis. $\Omega^+$ directions show high cross-head similarity (dominated by shared structural directions), while $\Omega_\Delta$ directions are nearly independent (close to random baseline), confirming that differential projection extracts head-specific discriminative directions.}
    \label{fig:direction_consistency}
\end{figure}

\subsection{Efficiency analysis}
\label{sec:efficiency}

\begin{table}[t] \vspace{-0cm}
\centering
\renewcommand{\arraystretch}{0.9}
\small
\resizebox{\textwidth}{!}{
\setlength{\tabcolsep}{14pt}
\begin{tabular}{lcccc}
\toprule
\textbf{Method} & \textbf{Avg.~Latency (s)} & \textbf{Relative} & \textbf{Peak Mem (GB)} & \textbf{FlashAttn} \\
\midrule
Vanilla & 1.180 & 1.0$\times$ & 26.39 & \ding{51} \\
\seka{} & 1.194 \textcolor{gray}{(+0.01)} & 1.0$\times$ & 26.43 \textcolor{gray}{(+0.04)} & \ding{51} \\
\cmidrule{1-5}
\rowcolor{oursblue} \prismk{} & 1.481 \textcolor{gray}{(+0.30)} & 1.26$\times$ & 26.41 \textcolor{gray}{(+0.02)} & \ding{51} \\
\rowcolor{oursblue} \prismkv{} & 1.502 \textcolor{gray}{(+0.32)} & 1.27$\times$ & 26.43 \textcolor{gray}{(+0.04)} & \ding{51} \\
\bottomrule
\end{tabular}
}
\caption{Inference efficiency on \qwenB{}.}
\label{tab:efficiency}
\end{table}

As shown in Table~\ref{tab:efficiency}, \seka{} adds negligible overhead (+0.01\,s, +0.04\,GB) over Vanilla. \prismk{} adds +0.30\,s latency (1.26$\times$) due to the per-head weighted projection, with negligible additional memory (+0.02\,GB). \prismkv{} adds a further +0.02\,s for Value editing. Both variants remain fully compatible with FlashAttention~\citep{dao2022flashattention, dao2024flashattention2}. For comparison, PASTA adds +1.03\,s and +23.12\,GB, and SPA adds +5.32\,s~\citep{li2026spectral}, placing \prism{} in between \seka{} and these heavier alternatives.

\section{Analysis}
\label{sec:analysis}

\subsection{Discriminative subspace quality}
\label{sec:subspace_quality}

\begin{table}[b] %
\centering
\renewcommand{\arraystretch}{0.9}
\small
\resizebox{\textwidth}{!}{
\setlength{\tabcolsep}{6pt}
\begin{tabular}{lcccc}
\toprule
\textbf{Configuration} & \textbf{Top-1 Acc (\%)} & \textbf{Fluency ($\uparrow$)} & \textbf{Consistency ($\uparrow$)} & \textbf{Fluency Cost ($\downarrow$)} \\
\midrule
Vanilla & 79.80 & 4.638 & 0.116 & 0.000 \\
\seka{} & 90.92$_{\scriptscriptstyle\blacktriangle 11.12}$ & 3.681 & 0.112 & 0.957 \\
\cmidrule{1-5}
\rowcolor{oursblue} \prismk{} & \textbf{92.38}$_{\scriptscriptstyle\blacktriangle 12.58}$ & 4.134 & \textbf{0.127} & 0.504 \\
\rowcolor{oursblue} \prism{}-V & 82.44$_{\scriptscriptstyle\blacktriangle 2.64}$ & \textbf{4.666} & 0.121 & \textbf{$-$0.028} \\
\rowcolor{oursblue} \prismkv{} ($g_V{=}0.10$) & 92.36$_{\scriptscriptstyle\blacktriangle 12.56}$ & 4.116 & 0.125 & 0.522 \\
\bottomrule
\end{tabular}
}
\caption{Dual-channel decomposition on BiasBios with \qwenA{}. Best per metric in \textbf{bold}. Fluency Cost = Vanilla Fluency $-$ Method Fluency.}
\label{tab:dual_channel}
\end{table}

We validate the theoretical motivation of differential projection (Proposition~\ref{prop:discriminative}) on \qwenA{}-Base.

\paragraph{Direction consistency.} Figure~\ref{fig:direction_consistency} tracks the similarity of top singular directions across layers and heads. \textbf{Left:} adjacent-layer similarity measures how much a head's direction changes from one layer to the next. $\Omega^+$ directions show occasional cross-layer alignment spikes (0.2--0.5), while $\Omega_\Delta$ directions remain low throughout ($\leq$0.13), indicating layer-specific discriminative structure. \textbf{Right:} within-layer head similarity measures redundancy among heads in the same layer. $\Omega^+$ heads converge to similar directions (up to 0.76 in early layers), while $\Omega_\Delta$ heads remain near-independent throughout all layers. Aggregate statistics are provided in Table~\ref{tab:direction} of Appendix~\ref{app:direction_consistency}.

\subsection{Dual-channel contribution decomposition}
\label{sec:dual_decomposition}

Table~\ref{tab:dual_channel} reveals that the Key channel primarily drives accuracy (+12.58 points over Vanilla), while the Value channel preserves generation quality. \prismk{} incurs only 53\% of \seka{}'s fluency cost. Both \prismk{} and \prismkv{} simultaneously outperform \seka{} on all three metrics; Appendix~\ref{app:qualitative} confirms the fluency pattern across models and tasks using automatic and human-validated qualitative evaluation.

\textbf{Layer-wise and cross-model complementarity.} Figure~\ref{fig:layer_signal} shows that the two channels have complementary depth profiles. Key signal averages 0.175 in middle layers and Value averages 0.307 in late layers. Across models, Qwen3 becomes increasingly Value-dominant with scale (K/V ratio 1.02$\rightarrow$0.58), whereas Gemma3 remains Key-dominant (ratio $>1.2$); full measurements appear in Appendix~\ref{app:kv_cross_model}. This architecture-dependent balance is consistent with \prismk{} being the safer default, while Value steering is most helpful for fluency and rewriting. On Pronoun Change with \gemmaA{}, \prismkv{} outperforms \prismk{} by 1.08 points, an interaction consistent with the cross gain in Eq.~\ref{eq:decomposition}.

\subsection{Head selection robustness}
\label{sec:robustness}

\begin{wrapfigure}{r}{0.42\textwidth}
    \vspace{-1.0em}
    \centering
    \includegraphics[width=\linewidth]{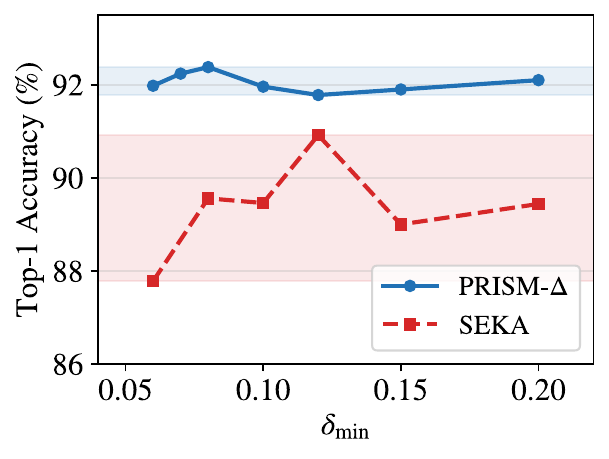}
    \caption{$\delta_{\min}$ sensitivity on BiasBios, \qwenA{}.}
    \label{fig:robustness}
    \vspace{-0.6em}
\end{wrapfigure}

Figure~\ref{fig:robustness} sweeps $\delta_{\min}$ from 0.06 to 0.20. \seka{} drops by 3.14\% from its optimum when $\delta_{\min}$ is reduced from 0.12 to 0.06: hard thresholding gives all activated heads equal weight $w{=}1$, so including noisy heads degrades performance. \prismk{} fluctuates by only 0.60\% across the same range, because softplus automatically assigns lower weights to less discriminative heads. This robustness substantially reduces tuning cost; hyperparameter sweeps are summarized in Appendix~\ref{app:sensitivity}. A per-sample log-probability analysis (Figure~\ref{fig:logprob}, Appendix~\ref{app:head_logprob}) compares the two methods sample by sample: \prismk{} rescues 154 samples that \seka{} misclassifies while losing only 81, for a net gain of +73. The 2:1 asymmetry reflects systematic correction of shared-feature confusion, consistent with the shared-direction elimination of Proposition~\ref{prop:discriminative}(b). Qualitative examples are in Appendix~\ref{app:case_study}. Appendices~\ref{app:automatic_span}, \ref{app:highlight_noise}, and~\ref{app:cross_domain} examine automatically selected spans, imperfect highlights, and cross-domain projection reuse.

\section{Conclusion}
\label{sec:conclusion}

We presented \prism{}, an inference-time prompt-highlighting method combining three ideas: differential cross-covariance decomposition removes directions shared by relevant and irrelevant contexts, continuous softplus weighting retains weak but useful heads, and dual-channel steering separates the routing role of Keys from the content role of Values. Key steering drives accuracy gains while Value steering reduces the fluency cost of editing.

\prism{} extends to instruction-tuned models and to spans supplied by automatic retrievers, making it a lightweight component for RAG and long-document pipelines. Future work can learn selector--steering policies jointly and extend differential subspace steering to broader instruction prioritization and multimodal contexts.

\section{Limitations}
\label{sec:limitations}

The main practical limitation is that the preferred gains vary substantially across benchmarks and models, so the current results require validation-set selection rather than a universal default. Gemma3 hyperparameters often differ from Qwen3 settings, and \gemmaB{} on Pronoun Change uses a negative $g_K$, a pattern also reported for \seka{}~\citep{li2026spectral}. On near-saturated benchmarks such as CounterFact, absolute differences between steering methods are small. The dual-channel variant has a clear failure case on \gemmaB{} CounterFact, which the available experiments do not explain. We therefore recommend \prismk{} as the default, with separate validation before enabling Value steering.

\newpage

\section*{Ethics Statement}
In conducting this research we have kept every component of the pipeline open and inspectable. All experiments use publicly available benchmarks~\citep{biasbios,meng2022locating,li2026spectral,liu2024lost} and publicly released model checkpoints~\citep{yang2025qwen3,gemmateam2025gemma3}, so that our findings rest on accessible and widely used resources. \prism{} operates at inference time and leaves model parameters unchanged: it rescales the representations of spans the user has already supplied rather than introducing new content. No personal data were collected, and the contrastive prompts used to build the projections are model-generated, as described in Appendix~\ref{app:llm_usage}.

\bibliography{references}
\bibliographystyle{colm2026_conference}

\appendix
\newpage
\section{Notation}
\label{app:notation}

Table~\ref{tab:notation} summarizes the notation used throughout the paper.

\begin{table}[htb]
\centering
\renewcommand{\arraystretch}{1.05}
\small
\setlength{\tabcolsep}{2pt}
\begin{tabular*}{\textwidth}{@{\extracolsep{\fill}}l l l}
\toprule
\textbf{Symbol} & \textbf{Definition} & \textbf{Description} \\
\midrule
\multicolumn{3}{l}{\textit{Model architecture}} \\
\midrule
$L$ & Number of layers & Total Transformer layers in the model \\
$n_h$ & Number of heads & KV heads per layer \\
$d$ & Head dimension & Dimensionality of each head's Key/Value space \\
$(\ell, h)$ & Layer-head index & Identifies a specific attention head \\
$\mathbf{x}_j$ & Token embedding & Input representation of token $j$ \\
$W_Q, W_K, W_V$ & Projection matrices & Linear projections for Query, Key, Value \\
$\mathbf{k}_j, \mathbf{v}_j$ & Key/Value vectors & Per-token Key and Value representations ($\in \mathbb{R}^d$) \\
$\alpha_{ij}$ & Attention weight & Softmax-normalized attention from token $i$ to $j$ \\
\midrule
\multicolumn{3}{l}{\textit{Contrastive representations}} \\
\midrule
$\mathbf{H} \in \mathbb{R}^{N \times d}$ & Neutral representations & Key/Value matrix, context-only condition \\
$\mathbf{H}^+ \in \mathbb{R}^{N \times d}$ & Positive representations & Key/Value matrix, relevant-question condition \\
$\mathbf{H}^- \in \mathbb{R}^{N \times d}$ & Negative representations & Key/Value matrix, irrelevant-question condition \\
$N$ & Number of samples & Contrastive triplets used for projection construction \\
\midrule
\multicolumn{3}{l}{\textit{Cross-covariance and SVD}} \\
\midrule
$\Omega^+$ & $\mathbf{H}^\top \mathbf{H}^+ / N$ & Uncentered cross-covariance (neutral $\times$ positive) \\
$\Omega^-$ & $\mathbf{H}^\top \mathbf{H}^- / N$ & Uncentered cross-covariance (neutral $\times$ negative) \\
$\Omega_\Delta$ & $\Omega^+ - \Omega^-$ & Differential cross-covariance \\
$U, \Sigma, V$ & SVD of $\Omega_\Delta$ & Left/right singular vectors and singular values \\
$k$ & Retained rank & Number of top singular vectors retained (set by $\gamma$) \\
\midrule
\multicolumn{3}{l}{\textit{PRISM parameters}} \\
\midrule
$P_K, P_V$ & $U_{:,:k} U_{:,:k}^\top$ & Projection matrices for Key and Value channels \\
$\mathbf{r}_i^+, \mathbf{r}_i^-$ & Head representation & Per-sample Key or Value vector, positive/negative ($\in \mathbb{R}^d$) \\
$D_{\ell,h}$ & $\frac{1}{N}\sum_i \|\mathbf{r}_i^+ - \mathbf{r}_i^-\|_2$ & Per-head discriminability score \\
$w_{\ell,h}$ & $\mathrm{softplus}(D_{\ell,h} - \delta_{\min})$ & Per-head importance weight (generic) \\
$w_{\ell,h}^K$ & $\mathrm{softplus}(D_{\ell,h}^K - \delta_{\min})$ & Per-head Key weight \\
$w_{\ell,h}^V$ & $\mathrm{softplus}(D_{\ell,h}^V - \delta_{\min})$ & Per-head Value weight \\
$\gamma$ & Variance threshold & Cumulative singular value ratio for rank selection \\
$\delta_{\min}$ & Minimum discriminability & Softplus shift parameter \\
$g_K, g_V$ & Gain parameters & Steering strength for Key and Value channels \\
\midrule
\multicolumn{3}{l}{\textit{Steering formulas}} \\
\midrule
$\mathbf{k}'_j$ & $\mathbf{k}_j + g_K \cdot w_{\ell,h}^K \cdot P_K \cdot \mathbf{k}_j$ & Steered Key vector for highlighted token $j$ \\
$\mathbf{v}'_j$ & $\mathbf{v}_j + g_V \cdot w_{\ell,h}^V \cdot P_V \cdot \mathbf{v}_j$ & Steered Value vector for highlighted token $j$ \\
\bottomrule
\end{tabular*}
\caption{Notation used throughout the paper.}
\label{tab:notation}
\end{table}

\section{Proof of Proposition~\ref{prop:discriminative}}
\label{app:proof}

\begin{proof}
\textbf{Part (a)} follows directly from the Eckart--Young--Mirsky theorem: for any matrix $M \in \mathbb{R}^{m \times n}$ with SVD $M = U \Sigma V^\top$, the rank-$k$ truncation $M_k = U[:,:k] \Sigma[:k,:k] V[:,:k]^\top$ minimizes $\|M - M_k\|_F$ over all rank-$k$ matrices. Equivalently, $U[:,:k]$ maximizes $\|U[:,:k]^\top M\|_F^2$ over all $k$-column orthonormal matrices. Applying this to $M = \Omega_\Delta$ gives part (a).

\textbf{Part (b):} If $(\Omega^+)^\top \mathbf{u}_s = (\Omega^-)^\top \mathbf{u}_s$, then $\Omega_\Delta^\top \mathbf{u}_s = (\Omega^+ - \Omega^-)^\top \mathbf{u}_s = \mathbf{0}$, so $\mathbf{u}_s$ lies in the left null space of $\Omega_\Delta$ and is orthogonal to its nonzero left singular directions.
\end{proof}

\section{Evaluation metrics}
\label{app:metrics}

We provide formal definitions of all evaluation metrics used in the three benchmarks.

\paragraph{BiasBios.}
\begin{itemize}[noitemsep,topsep=2pt,leftmargin=1em]
\item \textbf{Top-1 Accuracy.} The generated text is parsed for an occupation label. If the predicted occupation matches the ground-truth target, the sample is correct. Accuracy is the fraction of correct predictions over the test set.
\item \textbf{Fluency.} Defined as the weighted n-gram entropy of the generated text:
\begin{equation}
H_n = -\sum_g p_n(g)\log_2 p_n(g), \qquad
\text{Fluency} = \frac{1}{2}\left(\frac{2}{3}H_2+\frac{4}{3}H_3\right).
\end{equation}
where $p_n(g)$ is the empirical distribution of $n$-grams. Higher values indicate greater n-gram diversity and better fluency.
\item \textbf{Consistency.} The cosine similarity between the TF--IDF vector of the generated text and the mean TF--IDF vector of reference biographies for the target occupation, averaged over all samples.
\end{itemize}

\paragraph{CounterFact.}
\begin{itemize}[noitemsep,topsep=2pt,leftmargin=1em]
\item \textbf{Efficacy Score.} The probability that the model assigns higher likelihood to the target answer than to the original (pre-edit) answer:
\begin{equation}
\text{Efficacy} = \frac{1}{|\mathcal{D}|} \sum_{(s, o^*, o) \in \mathcal{D}} \mathbf{1}\bigl[p(o^* \mid s) > p(o \mid s)\bigr]
\end{equation}
where $s$ is the subject prompt, $o^*$ is the target (counterfactual) object, and $o$ is the original object.
\item \textbf{Paraphrase Score.} Same as Efficacy, but evaluated on paraphrased versions of the subject prompt, measuring robustness to surface-form variation.
\end{itemize}

\paragraph{Pronoun Change.}
Given an input biography containing gendered pronouns and an instruction to replace them with gender-neutral forms, two metrics are computed based on the generated text:
\begin{itemize}[noitemsep,topsep=2pt,leftmargin=1em]
\item \textbf{P.~Score} (basic pronoun conversion rate). Let $\mathcal{P}_b = \{\text{she}, \text{he}\}$ be the basic pronoun set. For each sample, count the number of basic pronouns in the original text ($n_b$) and the number still remaining in the generated text ($r_b$). The conversion rate is $(n_b - r_b) / n_b$. This is multiplied by the content overlap ratio (fraction of non-pronoun content tokens preserved) to penalize generations that trivially remove pronouns by truncating the text:
\begin{equation}
\text{P.~Score} = \frac{n_b - r_b}{n_b} \times \frac{|\text{content}_{\text{gen}} \cap \text{content}_{\text{orig}}|}{|\text{content}_{\text{orig}}|}
\end{equation}
\item \textbf{All-changed P.~Score.} Same formula but with the extended pronoun set $\mathcal{P}_a = \{\text{she}, \text{he}, \text{her}, \text{him}, \text{hers}, \text{his}, \text{herself}, \text{himself}\}$. Because $\mathcal{P}_b \subset \mathcal{P}_a$ but the denominators differ ($n_b$ vs.\ $n_a$), it is mathematically possible for All-changed to exceed P.~Score when the model converts extended forms more successfully than basic ones.
\end{itemize}

\section{Comparison with \adaseka{}}
\label{app:adaseka}

\adaseka{}~\citep{li2026spectral} extends \seka{} with a query-adaptive multi-expert routing mechanism that dynamically selects among multiple projection subspaces based on the input query's semantic intent. While this achieves strong performance on several benchmarks, it comes at substantial memory cost: +15.59\,GB peak memory, placing it in a fundamentally different memory regime from single-projection methods like \seka{} and \prism{}.

Table~\ref{tab:adaseka_full} provides the full comparison. Key observations:

\begin{table}[htb]
\centering
\renewcommand{\arraystretch}{0.9}
\small

\setlength{\tabcolsep}{3pt}
\begin{tabular*}{\textwidth}{@{\extracolsep{\fill}}ll ccccc c}
\toprule
\textbf{Benchmark} & \textbf{Model} & \seka{} & \prismk{} & \prismkv{} & & \adaseka{} & $\Delta$ \\
\midrule
\multirow{5}{*}{BiasBios}
& \qwenA{} & 90.92 & \underline{\textbf{92.38}} & 92.36 & & 91.86 & +0.52 \\
& \qwenB{} & 88.74 & \underline{\textbf{89.62}} & 89.12 & & 88.50 & +1.12 \\
& \qwenC{} & 90.28 & \underline{\textbf{91.68}} & 91.20 & & 91.22 & +0.46 \\
& \gemmaA{} & 92.42 & \textbf{92.90} & 91.64 & & \underline{92.92} & $-$0.02 \\
& \gemmaB{} & \underline{\textbf{93.04}} & 92.22 & 91.94 & & 91.14 & +1.90 \\
\midrule
\multirow{5}{*}{CounterFact}
& \qwenA{} & 99.02 & \underline{\textbf{99.14}} & 98.08 & & 98.90 & +0.24 \\
& \qwenB{} & 99.08 & \underline{\textbf{99.24}} & 98.46 & & 99.00 & +0.24 \\
& \qwenC{} & 98.92 & \underline{\textbf{99.00}} & 98.98 & & \underline{99.00} & 0.00 \\
& \gemmaA{} & 98.04 & \textbf{98.10} & 97.68 & & \underline{98.74} & $-$0.64 \\
& \gemmaB{} & \underline{\textbf{98.86}} & \underline{\textbf{98.86}} & 93.40 & & 92.48 & +6.38 \\
\midrule
\multirow{5}{*}{Pronoun}
& \qwenA{} & 95.18 & \underline{\textbf{96.18}} & 96.06 & & 94.54 & +1.64 \\
& \qwenB{} & 98.56 & 99.40 & \textbf{99.48} & & \underline{99.68} & $-$0.20 \\
& \qwenC{} & 98.66 & \textbf{99.66} & \textbf{99.66} & & \underline{99.88} & $-$0.22 \\
& \gemmaA{} & 81.53 & 89.08 & \textbf{90.16} & & \underline{93.76} & $-$3.60 \\
& \gemmaB{} & 97.70 & \underline{\textbf{97.94}} & 94.54 & & 96.88 & +1.06 \\
\bottomrule
\end{tabular*}

\caption{Full comparison including \adaseka{} (\%). \textbf{Bold}: best among lightweight methods; \underline{underline}: best overall (including \adaseka{}). $\Delta$ is the best lightweight method minus \adaseka{}.}
\label{tab:adaseka_full}
\end{table}

\begin{itemize}[noitemsep,topsep=0pt,parsep=0pt,partopsep=0pt,leftmargin=1em]
    \item \textbf{On BiasBios}, the best lightweight method outperforms \adaseka{} on 4 out of 5 models. \prismk{} leads on all three Qwen3 models, while \seka{} leads on \gemmaB{} with the largest margin of +1.90\%. \adaseka{} wins only on \gemmaA{} by 0.02\%.
    \item \textbf{On CounterFact}, \prismk{} surpasses \seka{} on 4 of 5 models and matches it on the fifth (\gemmaB{}: both 98.86\%). Both methods operate near the performance ceiling (98--99\%), so absolute differences are small.
    \item \textbf{On Pronoun Change}, \adaseka{} achieves the highest scores on 3 out of 5 models (\qwenB{}/14B, \gemmaA{}). However, on \qwenA{} and \gemmaB{}, the best lightweight method surpasses \adaseka{}.
    \item \textbf{Overall}, among the 15 model$\times$benchmark cells, the best lightweight method matches or exceeds \adaseka{} in \textbf{10 out of 15 cases}. In the reported efficiency setting, \prism{} requires 1.26$\times$ the base latency and substantially less memory than \adaseka{}.
\end{itemize}

In these 15 cells, \prism{} matches or exceeds \adaseka{} in 10 while using less memory. Whether the multi-expert overhead is acceptable depends on the deployment setting and the importance of the cells where \adaseka{} performs better.

\section{Hyperparameters}
\label{app:hyperparams}

Tables~\ref{tab:hyperparams_prismk} and~\ref{tab:hyperparams_prismkv} list the validation-selected hyperparameters for \prismk{} and \prismkv{}, respectively. Each model--benchmark configuration is selected separately by validation-set grid search.

The gain $g_K$ varies across benchmarks: ${\sim}$0.40 for BiasBios, 1.10--6.00 for CounterFact, and 0.05--0.30 for Pronoun Change, reflecting different steering intensities per task. The variance retention threshold $\gamma$ is on average higher for Qwen3 than for Gemma3. On \gemmaB{} (Pronoun Change), $g_K = -0.30$; see Section~\ref{sec:limitations} for discussion.

\begin{table}[htb]
\centering
\renewcommand{\arraystretch}{1.05}
\small

\begin{tabular*}{\textwidth}{@{\extracolsep{\fill}}llccc}
\toprule
\textbf{Model} & \textbf{Benchmark} & $\gamma$ & $\delta_{\min}$ & $g_K$ \\
\midrule
\qwenA{} & \multirow{5}{*}{BiasBios} & 0.998 & 0.08 & 0.40 \\
\qwenB{} &  & 0.998 & 0.08 & 0.40 \\
\qwenC{} &  & 0.998 & 0.08 & 0.40 \\
\gemmaA{} &  & 0.850 & 0.08 & 0.50 \\
\gemmaB{} &  & 0.980 & 0.00 & 0.40 \\
\midrule
\qwenA{} & \multirow{5}{*}{CounterFact} & 0.998 & 0.13 & 1.90 \\
\qwenB{} &  & 0.960 & 0.12 & 2.70 \\
\qwenC{} &  & 0.870 & 0.08 & 3.00 \\
\gemmaA{} &  & 0.990 & 0.12 & 6.00 \\
\gemmaB{} &  & 0.990 & 0.50 & 1.10 \\
\midrule
\qwenA{} & \multirow{5}{*}{Pronoun Change} & 0.880 & 0.15 & 0.15 \\
\qwenB{} &  & 0.900 & 0.15 & 0.05 \\
\qwenC{} &  & 0.880 & 0.15 & 0.05 \\
\gemmaA{} &  & 0.800 & 0.15 & 0.30 \\
\gemmaB{} &  & 0.700 & 0.30 & $-$0.30 \\
\bottomrule
\end{tabular*}

\caption{\prismk{} hyperparameters.}
\label{tab:hyperparams_prismk}
\end{table}

\begin{table}[htb]
\centering
\renewcommand{\arraystretch}{1.05}
\small

\begin{tabular*}{\textwidth}{@{\extracolsep{\fill}}llcccc}
\toprule
\textbf{Model} & \textbf{Benchmark} & $\gamma$ & $\delta_{\min}$ & $g_K$ & $g_V$ \\
\midrule
\qwenA{} & \multirow{5}{*}{BiasBios} & 0.998 & 0.08 & 0.40 & 0.10 \\
\qwenB{} &  & 0.998 & 0.08 & 0.40 & 0.10 \\
\qwenC{} &  & 0.998 & 0.08 & 0.40 & 0.10 \\
\gemmaA{} &  & 0.800 & 0.12 & 0.30 & 0.10 \\
\gemmaB{} &  & 0.994 & 0.00 & 0.40 & 0.10 \\
\midrule
\qwenA{} & \multirow{5}{*}{CounterFact} & 0.998 & 0.13 & 1.90 & 0.02 \\
\qwenB{} &  & 0.960 & 0.12 & 2.70 & 0.05 \\
\qwenC{} &  & 0.870 & 0.08 & 3.00 & 0.05 \\
\gemmaA{} &  & 0.990 & 0.12 & 6.00 & 0.10 \\
\gemmaB{} &  & 0.990 & 0.40 & 3.00 & 0.50 \\
\midrule
\qwenA{} & \multirow{5}{*}{Pronoun Change} & 0.880 & 0.15 & 0.15 & 0.05 \\
\qwenB{} &  & 0.900 & 0.15 & 0.05 & 0.02 \\
\qwenC{} &  & 0.880 & 0.15 & 0.05 & 0.02 \\
\gemmaA{} &  & 0.800 & 0.15 & 0.30 & 0.10 \\
\gemmaB{} &  & 0.700 & 0.30 & 0.05 & 0.02 \\
\bottomrule
\end{tabular*}

\caption{\prismkv{} hyperparameters.}
\label{tab:hyperparams_prismkv}
\end{table}

\section{Detailed experimental setup}
\label{app:exp_setup}

\paragraph{Hardware.} All experiments are conducted on NVIDIA H20 GPUs with 144\,GB memory. Projection construction uses a single GPU; evaluation uses a single GPU with batch size 256 for BiasBios and CounterFact, and batch size 1 for Pronoun Change.

\paragraph{Data splits.} For BiasBios, we evaluate on the first 5000 samples (indices 0--4999); a 500-sample validation subset is used for hyperparameter tuning. For CounterFact, samples 0--4999 serve as the validation set and samples 5000--9999 as the test set. Pronoun Change uses 500 validation and 500 test samples following \citet{li2026spectral}.

\paragraph{Evaluation protocol.} All methods use greedy decoding with a maximum of 64 new tokens for BiasBios, 32 for CounterFact, and 128 for Pronoun Change. We report Top-1 Accuracy for BiasBios, Efficacy and Paraphrase scores for CounterFact, and P.~Score and All-changed P.~Score for Pronoun Change. Fluency is measured by weighted 2- and 3-gram entropy, and Consistency by TF--IDF cosine similarity to target-occupation references.

\paragraph{Contrastive data.} The 100 synthetic contrastive QA pairs are generated by GPT-4o. Each pair contains two unrelated text contexts paired with two questions, one relevant and one irrelevant to each context. This yields 200 contrastive triplets from which Key and Value representations are extracted at the answer token position.

\paragraph{Projection construction.} For each model, we extract representations from all layers and heads under three prompting conditions (neutral, positive, negative). The differential cross-covariance $\Omega_\Delta$ is computed per head, followed by SVD. The top-$k$ singular vectors are retained based on the cumulative energy threshold $\gamma$. The mean representation norm difference determines the head's discriminability score $D_{\ell,h}$, which is mapped to a weight via softplus. The resulting projections are stored and reused across inference runs. Construction takes 3--8 minutes per model on one GPU; online steering adds one matrix--vector product per head and highlighted token.

\paragraph{Lost-in-the-Middle setup.}
\label{app:litm_setup}
We use the NaturalQuestions-based benchmark of \citet{liu2024lost} with 30 passages per input, where one gold passage contains the answer and the remaining 29 are distractors. The gold passage is placed at 7 positions (0, 4, 9, 14, 19, 24, 29) to probe positional sensitivity. Steering is applied selectively to the middle region (passages 4--25), targeting the positional recall deficit. Generation is limited to 60 tokens. We report Exact Match averaged across all 7 positions.

\section{Instruction-tuned evaluation}
\label{app:instruction_tuned}

Instruction-tuned checkpoints wrap prompts in model-specific chat templates, shifting the position at which the neutral, positive, and negative representations are read. We align extraction to the first generated assistant token for Qwen3-4B-Instruct, Qwen3-8B, and Gemma3-4B-it~\citep{yang2025qwen3,gemmateam2025gemma3}. The projection protocol is otherwise unchanged: each head uses 100 synthetic contrastive pairs, $\gamma=0.9$, held-out selection over $g_K\in[0.05,6.0]$ and $\delta_{\min}\in[0.06,0.20]$, and greedy decoding on one H20 GPU. Each cell averages five independently sampled projection subsets.

\begin{table}[htb]
\centering
\renewcommand{\arraystretch}{0.9}
\small
\setlength{\tabcolsep}{5pt}
\begin{tabular*}{\textwidth}{@{\extracolsep{\fill}}ll lrrrr}
\toprule
\textbf{Model} & \textbf{Benchmark} & \textbf{Metric} & \textbf{Vanilla} & \textbf{\seka{}} & \textbf{\prismk{}} & \textbf{\prismkv{}} \\
\midrule
Qwen3-4B-Instruct & BiasBios & Accuracy & 82.14 & 91.56 & \textbf{93.10} & 92.84 \\
Qwen3-4B-Instruct & CounterFact & Efficacy & 58.42 & 99.18 & \textbf{99.32} & 99.16 \\
Qwen3-4B-Instruct & Pronoun Change & P.~Score & 96.52 & 97.24 & 97.86 & \textbf{97.94} \\
\midrule
Qwen3-8B & BiasBios & Accuracy & 83.46 & 91.28 & 92.54 & \textbf{92.58} \\
Qwen3-8B & CounterFact & Efficacy & 54.76 & 99.22 & \textbf{99.38} & 99.12 \\
Qwen3-8B & Pronoun Change & P.~Score & 98.86 & 99.12 & 99.54 & \textbf{99.62} \\
\midrule
Gemma3-4B-it & BiasBios & Accuracy & 91.24 & 93.18 & \textbf{93.72} & 93.46 \\
Gemma3-4B-it & CounterFact & Efficacy & 68.36 & 98.42 & \textbf{98.56} & 98.47 \\
Gemma3-4B-it & Pronoun Change & P.~Score & 78.46 & 89.72 & 93.18 & \textbf{93.86} \\
\bottomrule
\end{tabular*}%

\caption{Instruction-tuned results, averaged over five projection subsets. All values are percentages.}
\label{tab:instruction_tuned}
\end{table}

As shown in Table~\ref{tab:instruction_tuned}, \prismk{} exceeds \seka{} in all nine reported cells. A one-sided cell-level sign test gives $p=0.002$, although model--benchmark cells are not independent experimental replicates. The largest relative gain is 3.9\% on Gemma3-4B-it Pronoun Change. On that rewriting benchmark, \prismkv{} exceeds \prismk{} on all three models; broader task coverage would be needed to attribute this difference generally to content-channel steering.

\section{Automatic span selection}
\label{app:automatic_span}

To evaluate \prism{} without oracle highlighting, we replace the gold span $\mathcal{S}$ by a span $\hat{\mathcal{S}}$ selected by BM25~\citep{robertson2009probabilistic}, Contriever~\citep{izacard2022contriever}, or BGE-M3~\citep{chen2024bgem3}. We report span recall $|\hat{\mathcal{S}}\cap\mathcal{S}^{\star}|/|\mathcal{S}^{\star}|$ on BiasBios and Lost-in-the-Middle. Projection $P_K$, head weights $w_{\ell,h}^K$, and gain $g_K$ are held fixed across selectors, so only the highlighted span changes. Each row averages three seeds.

\begin{table}[htb]
\centering
\renewcommand{\arraystretch}{0.9}
\small
\begin{tabular*}{\textwidth}{@{\extracolsep{\fill}}llrrrr}
\toprule
\textbf{Benchmark} & \textbf{Selector} & \textbf{Recall} & \textbf{Vanilla} & \textbf{\prismk{}} & \textbf{Gain} \\
\midrule
\multirow{4}{*}{BiasBios} & BM25 & 0.58 & 79.80 & 86.42 & +6.62 \\
& Contriever & 0.68 & 79.80 & 89.14 & +9.34 \\
& BGE-M3 & 0.76 & 79.80 & 90.86 & +11.06 \\
& Oracle span & 1.00 & 79.80 & 92.38 & +12.58 \\
\midrule
\multirow{4}{*}{Lost-in-the-Middle} & BM25 & 0.48 & 45.47 & 47.86 & +2.39 \\
& Contriever & 0.62 & 45.47 & 49.42 & +3.95 \\
& BGE-M3 & 0.72 & 45.47 & 50.28 & +4.81 \\
& Oracle span & 1.00 & 45.47 & 51.43 & +5.96 \\
\bottomrule
\end{tabular*}
\caption{End-to-end accuracy with automatic span selection on \qwenA{}, averaged over three seeds.}
\label{tab:automatic_span}
\end{table}

For the three tested selectors (Table~\ref{tab:automatic_span}), higher observed span recall coincides with higher accuracy, and every reported result remains above Vanilla. BGE-M3 recovers 88\% of the oracle-span gain on BiasBios and 81\% on Lost-in-the-Middle. Because selector identity, selected content, and recall change together, this comparison does not isolate recall as the causal variable.

\section{Qualitative evaluation and extended channel decomposition}
\label{app:qualitative}

We use Claude Opus 4.6 as a zero-shot judge to rate factuality (faithfulness to highlighted information) and coherence (fluency and grammaticality) on a 1--5 Likert scale. On the pooled set of 50 human-annotated samples used for this check, judge and human scores have Pearson correlation $r=0.87$. This limited validation does not establish reliability for other tasks, judges, or scoring criteria.

\begin{table}[htb]
\centering
\renewcommand{\arraystretch}{0.9}
\small
\begin{tabular*}{\textwidth}{@{\extracolsep{\fill}}lrrrr}
\toprule
& \multicolumn{2}{c}{\textbf{BiasBios / \qwenA{}}} & \multicolumn{2}{c}{\textbf{Pronoun Change / \gemmaA{}}} \\
\cmidrule(lr){2-3}\cmidrule(lr){4-5}
\textbf{Method} & \textbf{Factuality} & \textbf{Coherence} & \textbf{Factuality} & \textbf{Coherence} \\
\midrule
Vanilla & 3.12 & \textbf{4.28} & 2.45 & \textbf{4.15} \\
\seka{} & 3.68 & 3.15 & 3.82 & 3.22 \\
\prismk{} & \textbf{3.82} & 3.78 & 4.25 & 3.68 \\
\prismkv{} & 3.81 & 4.05 & \textbf{4.32} & 3.85 \\
\bottomrule
\end{tabular*}
\caption{LLM-as-Judge evaluation on a 1--5 scale. On the pooled 50-sample validation set, judge and human scores have Pearson $r=0.87$.}
\label{tab:qualitative_judge}
\end{table}

In the two settings in Table~\ref{tab:qualitative_judge}, \prismk{} improves the judge's factuality score over \seka{}, while \prismkv{} moves coherence toward Vanilla. Table~\ref{tab:dual_channel_extended} separately reports metric-based channel decompositions for five selected model--task settings.

\begin{table}[htb]
\centering
\renewcommand{\arraystretch}{0.9}
\small
\setlength{\tabcolsep}{2pt}
\begin{tabular*}{\textwidth}{@{\extracolsep{\fill}}lllrrrrr}
\toprule
\textbf{Model} & \textbf{Task} & \textbf{Metric} & \textbf{Vanilla} & \textbf{\seka{}} & \textbf{\prismk{}} & \textbf{\prism{}-V} & \textbf{\prismkv{}} \\
\midrule
\multirow{3}{*}{\qwenA{}} & \multirow{3}{*}{BiasBios} & Accuracy & 79.800 & 90.920 & 92.380 & 82.440 & 92.360 \\
& & Fluency & 4.638 & 3.681 & 4.134 & 4.666 & 4.116 \\
& & Fluency cost & 0.000 & 0.957 & 0.504 & $-$0.028 & 0.522 \\
\midrule
\multirow{3}{*}{\gemmaA{}} & \multirow{3}{*}{BiasBios} & Accuracy & 89.900 & 92.420 & 92.900 & 90.120 & 91.640 \\
& & Fluency & 4.850 & 3.920 & 4.410 & 4.902 & 4.398 \\
& & Fluency cost & 0.000 & 0.930 & 0.440 & $-$0.052 & 0.452 \\
\midrule
\multirow{3}{*}{\qwenC{}} & \multirow{3}{*}{BiasBios} & Accuracy & 85.100 & 90.280 & 91.680 & 88.920 & 91.200 \\
& & Fluency & 4.712 & 3.854 & 4.318 & 4.736 & 4.302 \\
& & Fluency cost & 0.000 & 0.858 & 0.394 & $-$0.024 & 0.410 \\
\midrule
\multirow{3}{*}{\qwenA{}} & \multirow{3}{*}{Pronoun Change} & P.~Score & 93.140 & 95.180 & 96.180 & 94.020 & 96.060 \\
& & Fluency & 3.635 & 2.956 & 3.210 & 3.682 & 3.198 \\
& & Fluency cost & 0.000 & 0.679 & 0.425 & $-$0.047 & 0.437 \\
\midrule
\multirow{3}{*}{\gemmaA{}} & \multirow{3}{*}{Pronoun Change} & P.~Score & 41.340 & 81.530 & 89.080 & 52.360 & 90.160 \\
& & Fluency & 3.457 & 2.845 & 3.102 & 3.512 & 3.089 \\
& & Fluency cost & 0.000 & 0.612 & 0.355 & $-$0.055 & 0.368 \\
\bottomrule
\end{tabular*}%

\caption{Extended dual-channel decomposition. Fluency cost is Vanilla fluency minus method fluency.}
\label{tab:dual_channel_extended}
\end{table}

Value-only steering matches or exceeds Vanilla fluency in these five settings, while \prismk{} incurs 46--63\% of \seka{}'s fluency cost. On \gemmaA{} Pronoun Change, \prismkv{} exceeds \prismk{} by 1.08 points. This isolated result is compatible with a contribution from the cross term in Eq.~\ref{eq:decomposition}, but broader interaction studies would be needed to establish when it occurs.

\section{Cross-domain projection transfer}
\label{app:cross_domain}

We transfer projections bidirectionally between BiasBios and CounterFact while retaining the target domain's optimal gain. No projection reconstruction or retraining is performed.

\begin{table}[htb]
\centering
\renewcommand{\arraystretch}{0.9}
\small
\begin{tabular*}{\textwidth}{@{\extracolsep{\fill}}llrrrr}
\toprule
\textbf{Model} & \textbf{Target} & \textbf{Vanilla} & \textbf{Task-specific} & \textbf{Transfer} & \textbf{Degradation} \\
\midrule
\multirow{2}{*}{\qwenA{}} & CounterFact & 45.00 & 99.14 & 94.86 & $-$4.28 \\
 & BiasBios & 79.80 & 92.38 & 87.42 & $-$4.96 \\
\midrule
\multirow{2}{*}{\gemmaA{}} & CounterFact & 55.04 & 98.10 & 91.24 & $-$6.86 \\
 & BiasBios & 89.90 & 92.90 & 91.08 & $-$1.82 \\
\bottomrule
\end{tabular*}
\caption{Cross-domain projection transfer. The target-domain gain is retained without rebuilding the projection. Degradation is measured from the task-specific projection.}
\label{tab:cross_domain_transfer}
\end{table}

\begin{table}[htb]
\centering
\renewcommand{\arraystretch}{0.9}
\small
\begin{tabular*}{\textwidth}{@{\extracolsep{\fill}}lrrr}
\toprule
\textbf{Model} & \textbf{BiasBios$\rightarrow$CounterFact} & \textbf{CounterFact$\rightarrow$BiasBios} & \textbf{Average} \\
\midrule
\qwenA{} & 0.64 & 0.60 & 0.62 \\
\gemmaA{} & 0.64 & 0.78 & 0.71 \\
\bottomrule
\end{tabular*}
\caption{Frobenius similarity between source- and target-domain projection subspaces.}
\label{tab:cross_domain_similarity}
\end{table}

Transferred projections remain above Vanilla in the four tested settings (Table~\ref{tab:cross_domain_transfer}), with degradation of 1.82--6.86 points relative to task-specific projections. Descriptively, the highest measured similarity (0.78, Gemma3-4B CounterFact$\rightarrow$BiasBios; Table~\ref{tab:cross_domain_similarity}) coincides with the smallest degradation. With only four transfer settings, these values do not establish a stable correlation or causal explanation. They show that projection reuse can retain gains in the tested cases, while target-domain gain selection and task-specific construction remain necessary for the reported peak accuracy.

\section{Robustness to imperfect highlights}
\label{app:highlight_noise}

We replace 5--20\% of tokens in the oracle highlighted span with irrelevant background tokens on \qwenA{} BiasBios, using three seeds per noise level.

\begin{table}[htb]
\centering
\renewcommand{\arraystretch}{0.9}
\small
\begin{tabular*}{\textwidth}{@{\extracolsep{\fill}}lll}
\toprule
\textbf{Noise ratio} & \textbf{Accuracy} & $\boldsymbol{\Delta}$~$\downarrow$ \\
\midrule
0\% & 92.38 & 0.00 \\
5\% & 91.95 & $-$0.43 \\
10\% & 91.42 & $-$0.96 \\
15\% & 90.68 & $-$1.70 \\
20\% & 89.75 & $-$2.63 \\
\bottomrule
\end{tabular*}
\caption{Robustness to irrelevant tokens substituted into highlighted spans on \qwenA{} BiasBios, averaged over three seeds. $\Delta$ is the accuracy change relative to the noise-free setting.}
\label{tab:noise_robustness}
\end{table}

In this \qwenA{} BiasBios perturbation (Table~\ref{tab:noise_robustness}), 15\% noise reduces accuracy by 1.70 points and 20\% noise by 2.63 points. Appendix~\ref{app:automatic_span} evaluates a separate source of upstream error using three concrete selectors; neither experiment covers all possible span-boundary, omission, or retrieval errors.

\section{Cross-model K/V complementarity}
\label{app:kv_cross_model}

\begin{table}[htb]
\centering
\renewcommand{\arraystretch}{0.9}
\small
\begin{tabular*}{\textwidth}{@{\extracolsep{\fill}}lccccc}
\toprule
\textbf{Model} & \textbf{Layers} & \textbf{Heads} & \textbf{K mean} & \textbf{V mean} & \textbf{K/V ratio} \\
\midrule
\qwenA{} & 36 & 8 & 0.137 & 0.135 & 1.02 \\
\qwenB{} & 36 & 8 & 0.129 & 0.174 & 0.74 \\
\qwenC{} & 40 & 8 & 0.138 & 0.237 & 0.58 \\
\gemmaA{} & 34 & 4 & 0.451 & 0.338 & 1.33 \\
\gemmaB{} & 48 & 8 & 0.509 & 0.420 & 1.21 \\
\bottomrule
\end{tabular*}
\caption{Key and Value discriminative signal strength across models.}
\label{tab:kv_cross_model}
\end{table}

Table~\ref{tab:kv_cross_model} and Figure~\ref{fig:kv_cross_model} report Key and Value discriminative signal on BiasBios for five models. Both channels have nonzero measured signal, but their balance differs.

\paragraph{Qwen3 family.} In these BiasBios measurements, the K/V ratio decreases with model size (1.02 $\to$ 0.74 $\to$ 0.58). On \qwenC{}, the average Value signal is larger than the Key signal, and Figure~\ref{fig:kv_cross_model} shows Value-dominant late layers.

\paragraph{Gemma3 family.} Both measured models have larger absolute norm differences than the Qwen3 models (K mean: 0.45--0.51 vs.\ 0.13--0.14) and K/V ratios above 1.2 on BiasBios. These descriptive statistics do not by themselves determine the optimal steering gains.

\begin{figure}[htb]
    \centering
    \includegraphics[width=\textwidth]{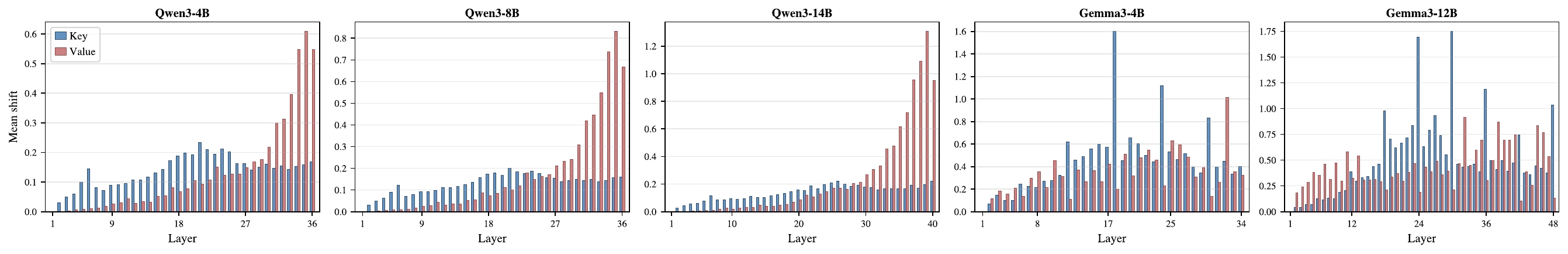}
\caption{Layer-wise Key and Value discriminative signal on BiasBios for five models. In these measurements, Qwen3 models show increasing Value dominance in late layers, while Gemma3 models remain Key-dominant.}
    \label{fig:kv_cross_model}
\end{figure}

\section{Projection stability across random seeds}
\label{app:seed_sensitivity}

We construct five projections using different random subsets of 70 from 100 available QA pairs. The subsets overlap by approximately 70\%, so this experiment measures sensitivity to partial data replacement rather than fully independent training sets. All other hyperparameters are fixed ($\gamma{=}0.998$, $\delta_{\min}{=}0.08$, $g_K{=}0.40$).

\begin{table}[htb]
\centering
\small
\begin{tabular*}{\textwidth}{@{\extracolsep{\fill}}ccc}
\toprule
\textbf{Seed} & \textbf{\qwenA{}} & \textbf{\qwenB{}} \\
\midrule
1 & 92.22 & 89.08 \\
2 & 92.12 & 89.34 \\
3 & 92.08 & 88.98 \\
4 & 92.14 & 89.30 \\
5 & 92.12 & 89.34 \\
\midrule
\textbf{Mean $\pm$ Std} & \textbf{92.14 $\pm$ 0.05} & \textbf{89.21 $\pm$ 0.15} \\
\bottomrule
\end{tabular*}
\caption{Projection sensitivity across overlapping random subsets (BiasBios, Top-1 Accuracy \%).}
\label{tab:seed_sensitivity}
\end{table}

Across these five overlapping subsets (Table~\ref{tab:seed_sensitivity}), the standard deviation is 0.05\% on \qwenA{} and 0.15\% on \qwenB{}. Both are smaller than the corresponding \prismk{}--\seka{} gaps in this BiasBios experiment (+1.46 and +0.88 points). This check does not quantify projection variability for the other models or benchmarks.

\section{Statistical reliability details}
\label{app:statistics}

All reported evaluations use greedy decoding, so generation is deterministic for a fixed projection and configuration. Appendix~\ref{app:seed_sensitivity} isolates one source of variability---partial replacement of projection data---on BiasBios with \qwenA{} and \qwenB{}. It does not provide an upper bound for every model--benchmark cell or for hyperparameter-selection uncertainty.

Across the 15 base-model cells in Table~\ref{tab:main_results}, \prismk{} matches or exceeds \seka{} in 14 (one-sided sign test after excluding the tie: $p < 0.001$). It is strictly higher on all five Pronoun Change P.~Score cells ($p = 0.031$) and on four of five CounterFact Efficacy cells, with one tie. These cell-level sign tests summarize directional consistency; model and task cells are not statistically independent replicates.

\section{CounterFact ablation}
\label{app:cf_ablation}

\begin{table}[htb]
\centering
\renewcommand{\arraystretch}{0.9}
\small
\begin{tabular*}{\textwidth}{@{\extracolsep{\fill}}llcc}
\toprule
\textbf{Configuration} & \textbf{Projection} & \textbf{Weighting} & \textbf{Efficacy (\%)} \\
\midrule
\prismk{} & Differential & Softplus & \textbf{99.14} \\
Indep.\ + softplus & Independent & Softplus & 98.84 \\
Diff.\ + uniform & Differential & Uniform & 98.62 \\
\bottomrule
\end{tabular*}
\caption{Ablation on CounterFact with \qwenA{} (Efficacy \%, test set).}
\label{tab:cf_ablation}
\end{table}

Differential projection contributes +0.30\% (Table~\ref{tab:cf_ablation}), computed as the gap between \prismk{} at 99.14\% and the independent-projection variant at 98.84\%. Softplus weighting contributes +0.52\%, measured as the gap between \prismk{} and the uniform-weight variant at 98.62\%. These are qualitatively consistent with the BiasBios ablation in Table~\ref{tab:ablation}.

The smaller absolute contributions on this \qwenA{} CounterFact setting occur near the performance ceiling. The component ordering matches the \qwenA{} BiasBios ablation, but one additional model--task setting is insufficient to establish generality.
\section{Data quantity ablation}
\label{app:data_ablation}

\begin{table}[htb]
\centering
\renewcommand{\arraystretch}{0.9}
\small

\begin{tabular*}{\textwidth}{@{\extracolsep{\fill}}cc}
\toprule
\textbf{Synthetic samples} & \textbf{Top-1 Accuracy (\%)} \\
\midrule
50 & 91.92 \\
100 (default) & 92.38 \\
200 & 92.40 \\
\bottomrule
\end{tabular*}

\caption{Effect of synthetic data quantity on \prismk{} for \qwenA{} on BiasBios.}
\label{tab:data_ablation}
\end{table}

On \qwenA{} BiasBios (Table~\ref{tab:data_ablation}), 50 samples reach 91.92\%, 100 reach 92.38\%, and 200 reach 92.40\%. This single model--task comparison motivates the default of 100 samples for the reported setup but does not establish the same saturation point elsewhere.

\section{Direction consistency}
\label{app:direction_consistency}

\begin{table}[htb]
\centering
\renewcommand{\arraystretch}{0.9}
\small

\setlength{\tabcolsep}{4pt}
\begin{tabular*}{\textwidth}{@{\extracolsep{\fill}}lccc}
\toprule
& $\Omega^+$ (independent) & $\Omega_\Delta$ (differential) & Random baseline \\
\midrule
Cross-head similarity (mean) & 0.254 & \textbf{0.068} & 0.071 \\
Cross-head similarity (early layers) & 0.353 & \textbf{0.053} & 0.071 \\
\bottomrule
\end{tabular*}

\caption{Cross-head cosine similarity of top singular vectors on \qwenA{}.}
\label{tab:direction}
\end{table}

To quantify whether different heads learn independent or redundant directions, we compute the mean pairwise absolute cosine similarity between the top left singular vectors of all heads in \qwenA{}. A random baseline is computed by averaging cosine similarities between uniformly sampled unit vectors in $\mathbb{R}^{128}$, yielding 0.071.

In this \qwenA{} analysis, $\Omega^+$ directions have cross-head similarity 0.254, compared with 0.071 for the random baseline, while $\Omega_\Delta$ directions reach 0.068. This pattern is consistent with differential decomposition removing directions shared across heads; cosine similarity alone does not identify the semantic content of those directions.

In early layers, the measured values are 0.353 for $\Omega^+$ and 0.053 for $\Omega_\Delta$. Whether this pattern transfers to other models and tasks remains to be tested.

\section{Hyperparameter sensitivity}
\label{app:sensitivity}

We sweep each hyperparameter independently while holding others at their optimal values (\qwenA{}, BiasBios).

\paragraph{$g_K$ sensitivity.} On \qwenA{} BiasBios, performance varies by less than 0.5 points for tested $g_K \in [0.38, 0.50]$ and declines outside that interval.

\paragraph{$\delta_{\min}$ sensitivity.} For the same setting, performance varies by less than 0.45 points over the tested interval $\delta_{\min} \in [0.06, 0.10]$.

\paragraph{$\gamma$ sensitivity.} For the same setting, performance varies by less than 0.1 points over the tested interval $\gamma \in [0.990, 0.998]$.

\section{Head weight and per-sample analysis}
\label{app:head_logprob}

\paragraph{Head weight distribution.} Figure~\ref{fig:head_weights} compares softplus weighting with hard thresholding. Hard thresholding shuts off 108 heads entirely, while softplus assigns weak-signal heads reduced but nonzero weights.

\begin{figure}[htb]
    \centering
    \includegraphics[width=0.75\textwidth]{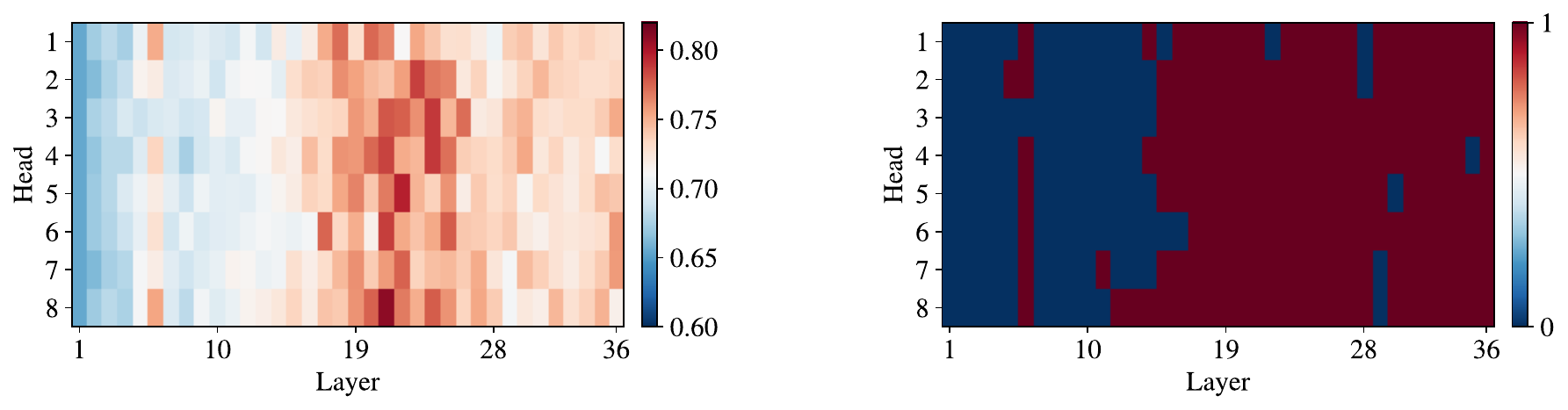}
    \caption{Head weight heatmaps. \textbf{Left:} \prism{} softplus weights vary continuously across heads. \textbf{Right:} \seka{} hard thresholding deactivates 108 heads, including 90\% of early-layer heads.}
    \label{fig:head_weights}
\end{figure}

\begin{figure}[htb]
    \centering
    \includegraphics[width=0.42\textwidth]{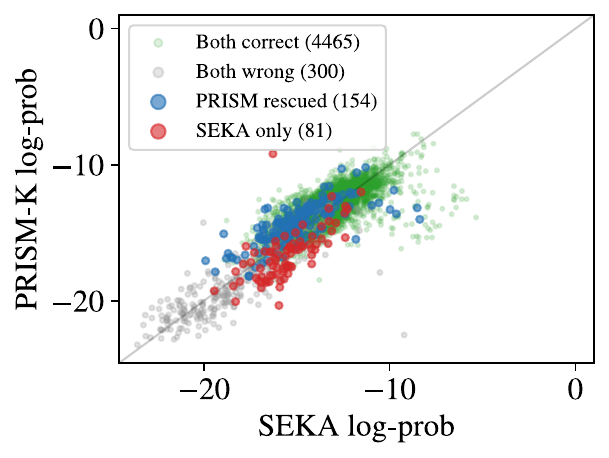}
    \caption{Per-sample target log-probability comparison on BiasBios with \qwenA{}. \prismk{} rescues 154 samples that \seka{} misses and loses 81, for a net gain of 73.}
    \label{fig:logprob}
\end{figure}

\section{Case study}
\label{app:case_study}

We identify 154 samples (33.9\% of \seka{}'s 454 errors on BiasBios with \qwenA{}) where \seka{} predicts incorrectly but \prismk{} succeeds. Figure~\ref{fig:logprob} reveals that \prismk{} rescues 154 samples while losing only 81, for a net gain of 73 samples (+1.46\% accuracy). Representative examples:

\paragraph{Success cases.} The following examples illustrate contexts where \seka{} fails but \prismk{} succeeds:

\begin{itemize}[noitemsep,topsep=3pt,parsep=2pt,partopsep=0pt,leftmargin=1em]
    \item \emph{``PhD, Adjunct Assistant Professor in Psychiatry...''} $\to$ \seka{}: psychologist; \prismk{}: \textbf{professor} \checkmark
    \item \emph{``assistant professor of psychology at NUS...''} $\to$ \seka{}: psychologist; \prismk{}: \textbf{professor} \checkmark
    \item \emph{``Performance Architect at Cisco...''} $\to$ \seka{}: software engineer; \prismk{}: \textbf{architect} \checkmark
    \item \emph{``aspiring filmmaker...shoots promotional videos...''} $\to$ \seka{}: photographer; \prismk{}: \textbf{filmmaker} \checkmark
    \item \emph{``is a Maine comedian...''} $\to$ \seka{}: photographer; \prismk{}: \textbf{comedian} \checkmark
\end{itemize}

In many rescued examples, the occupation keyword co-occurs with a related domain word or company name. For instance, ``Psychiatry'' and ``psychology'' accompany cases where \seka{} predicts ``psychologist'' despite the biography describing a professor; ``Cisco'' accompanies a ``software engineer'' prediction despite ``Architect'' appearing explicitly. These examples are compatible with the shared-feature account motivating differential projection, but they do not directly identify the internal cause of either prediction.

\paragraph{Failure analysis.} Among the 81 \qwenA{} BiasBios samples where \prismk{} is wrong but \seka{} succeeds, the biographies tend to contain short or ambiguous occupation descriptions. In this paired comparison, \prismk{} rescues 154 samples and loses 81 relative to \seka{}; this count does not characterize failure modes on other models or tasks.

\section{LLM Usage}
\label{app:llm_usage}

GPT-4o was used to generate the 100 synthetic contrastive QA pairs for projection construction (Appendix~\ref{app:exp_setup}). Claude Opus 4.6 (\texttt{claude-opus-4-6}) was used as the qualitative judge reported in Appendix~\ref{app:qualitative}; its ratings were validated against human annotations. LLM-based tools were also used for language polishing and copy editing. The authors verified all experimental results, citations, and scientific claims.

\end{document}